%% file: main.tex
\documentclass{article}

\PassOptionsToPackage{numbers, compress}{natbib}
\usepackage[preprint]{neurips_2025}

\usepackage[utf8]{inputenc} % allow utf-8 input
\usepackage[T1]{fontenc}    % use 8-bit T1 fonts
\usepackage{hyperref}       % hyperlinks
\usepackage{url}            % simple URL typesetting
\usepackage{booktabs}       % professional-quality tables
\usepackage{amsfonts}       % blackboard math symbols
\usepackage{amsmath}
\usepackage{amsthm}
\usepackage{nicefrac}       % compact symbols for 1/2, etc.
\usepackage{microtype}      % microtypography
\usepackage{xcolor}         % colors
\usepackage{minted}
\usepackage{algorithm}
\usepackage{algpseudocode}
\usepackage{geometry}
\usepackage{cleveref}
\usepackage{graphicx}
\usepackage{subcaption}

\newtheorem{proposition}{Proposition}
\newtheorem{definition}{Definition}

\title{From SGD to Muon: Adaptive Optimization via Schatten-p Norms}

\author{%
  Thomas Massena \thanks{Correspondance to thomas.massena@irit.fr}
  \\
  IRIT \& SNCF \\
  \And 
  Corentin Friedrich \\
  IRT Saint Exupery \\
  \And
  Mathieu Serrurier \\
  IRIT \\
}

\begin{document}

\maketitle

\input{sec/00_abstract}
\input{sec/01_introduction}
\input{sec/02_background}
\input{sec/03_rfr_context}
\input{sec/04_estimating_p_star}

\input{sec/05_fractional_update}

\input{sec/06_experiments}

\input{sec/07_conclusion}
\input{sec/08_acknowledgements}

\bibliographystyle{plainnat}
\bibliography{biblio}

\appendix

\input{appendix/a_proofs}
\input{appendix/b_second_order_p}
\input{appendix/c_maximal_p}
\input{appendix/d_p_stability}
\input{appendix/e_approximating_p_star}
\input{appendix/f_adam_results}
\input{appendix/g_retroaction}

\input{appendix/last}

\end{document}

%% file: sec/00_abstract.tex
\begin{abstract}
    Modern optimizers, like Muon, impose matrix-wise geometry constraints on their updates. 
    These matrix-wise constraints can be unified under Linear Minimization Oracle (LMO) theory.
    However, all current methods impose fixed LMO geometries for the update rules, chosen by-design or empirically, which are not necessarily optimal according to the problem's geometry.
    We introduce a novel efficient data-driven criterion for dynamically choosing proxy-optimal update LMO geometries on individual Deep Neural Network layers.
    Derived in closed form from gradient and activation statistics using a single-step random feature regression surrogate model, our criterion navigates a design space interpolating from SGD to Muon updates.
    Moreover, integrating parameter-wise preconditioning allows our framework to recover SGD, Muon, Adam, and MuAdam as specific extrema.
    To make this adaptive approach scalable, we pair it with efficient computational strategies, achieving only a $\sim 3\%$ runtime overhead on highly optimized baselines.
    As a proof of concept, we show that this data-driven optimizer beats or remains competitive with the performance of the best performing optimizer between Muon and AdamW across three different training scenarios.
    Ultimately, this work provides evidence that LMO geometry can be successfully and efficiently adapted from runtime data, opening a new pathway for optimizer design beyond static geometries.
\end{abstract}

%% file: sec/01_introduction.tex
\section{Introduction}
\label{sec:intro}

The training of deep neural networks has traditionally been dominated by coordinate-wise adaptive optimizers like Adam and its variants~\cite{kingma2015adam,loshchilov2019decoupledweightdecayregularization}. By scaling gradient updates element-by-element, these methods are highly effective at managing heterogeneous gradient scales and stochastic noise. However, by treating parameters as flattened vectors, they inherently ignore the matrix geometries and structural dependencies present in the transformer-dominated neural architecture landscape~\cite{vaswani2017attention}.
Recently, the Muon optimizer emerged as a powerful alternative that explicitly targets the matrix geometry of hidden layers~\cite{jordan2024muon}. Instead of parameter-wise scaling, Muon updates weight matrices by using the closest orthogonal matrix to the gradient matrix in Frobenius norm (also known as the matrix polar factor). This operation removes anisotropic scaling and enforces gradient orthogonality, leading to strong empirical performance, even at large scales for large language model pretraining~\cite{liu2025muon}. 
Despite these breakthroughs, the purely geometric approach of Muon exhibits notable vulnerabilities. By treating all singular values equally (effectively applying a flattening operation to the spectrum), Muon lacks the variance tracking and coordinate-wise adaptivity that makes Adam robust. Consequently, Muon can magnify noisy gradient directions on ill-conditioned loss landscapes~\cite{gonon2026insights,li2025normuon,si2025adamuon,dragutinovic2026to}. 

All existing approaches rely on a heuristically fixed update geometry described by a fixed norm Linear Minimization Oracle~\cite{pethick2025training}.
In this paper, we propose a generalization that bridges the gap between coordinate-wise adaptivity and spectral geometry. Building on the insight that standard Euclidean descent (SGD) and spectral descent (Muon) are not competing paradigms, but rather the extreme ends of a broader family of updates defined under Schatten-$p$ norms~\cite{schatten1950theory,cesista2025schattenp,peyre2026muondynamicsspectralwasserstein}, we consider a single-step random feature regression model surrogate, previously characterized in~\citet{davis2025spectral} for estimating optimal update geometries $p^*$ based on first-order accessible quantities, i.e: activations, gradients and momenta. 
% We then seek to answer the following question:

% \begin{quote}
%     \textit{``Can a single-step random feature regression model allow us to improve optimization when SGD or Muon are not optimal update rules ?''}.
% \end{quote}
Our contributions are the following:

\begin{itemize}
    \item First, we provide a first-order theoretically principled, single-step, random feature regression model that allows us to pick layer-wise optimal Schatten-$p$ norm constraint descent rules according to the geometry of the problem (\Cref{sec:first_order_proxy}).
    \item Then, we study the stability of this $p^*$ proxy across layers throughout training, and observe stable evolutions that undergo sudden phase changes during training. Leveraging this, we find strategies to update layer-wise $p^*$ values in a stable and efficient manner throughout training (\Cref{sec:estimator}).
    \item Additionally, we provide a method for the estimation of the optimal $p^*$-dependent update; building on state of the art polar factor estimation methods to minimize runtime overhead and allow for competitive geometry-aware optimization (\Cref{sec:pth_root}).
    \item Finally, we validate our method by running a set of diverse experiments, showing that our optimizer matches or surpasses the best performing optimizer between Muon and AdamW in a variety of experimental settings (\Cref{sec:experiments}).
\end{itemize}

We provide access to our codebase on \href{https://github.com/deel-ai-papers/schatten-muon}{GitHub}. 

%% file: sec/02_background.tex
\section{Theoretical Background and Related Work}

First, let us recall the definition of a Schatten-$p$ norm. 

\begin{definition}[Schatten-$p$ Norm]
 Let $X \in \mathbb{R}^{m \times n}$, with singular values $\sigma_i(X)$. For $p \in [1, \infty)$, the Schatten $p$-norm of $X$, denoted $\|X\|_{S_p}$, is defined as the $\ell_p$-norm of its singular values:
\begin{equation}
    \|X\|_{S_p} := \left( \sum_{i=1}^{\min(m,n)} \sigma_i^p(X) \right)^{\frac{1}{p}}
\end{equation}
\end{definition}

In the following section, we provide the theoretical background necessary to our method.
% \textbf{Definition 1} (Schatten $p$-norm)\textbf{.} 
% Let $X \in \mathbb{R}^{m \times n}$, and let $\sigma_i(X)$ be the singular values of $X$. For $p \in [1, \infty)$, the Schatten $p$-norm of $X$, denoted $\|X\|_{S_p}$, is defined as the $\ell_p$-norm of singular values:
% \begin{equation}
%     \|X\|_{S_p} := \left( \sum_{i=1}^{\min(m,n)} \sigma_i^p(X) \right)^{\frac{1}{p}}
% \end{equation}

% For the case $p = \infty$, the norm is defined by the maximum singular value, $\|X\|_{S_\infty} := \max_i \sigma_i(X)$, i.e., the spectral norm. While, $p=1$ recovers the nuclear norm $\|X\|_*$ and $p=2$ recovers the Frobenius norm $\|X\|_F$.

\subsection{Generalizing Muon and SGD as Schatten-p Norm LMOs}
\label{sec:generalization}

Following insights from~\citet{peyre2026muondynamicsspectralwasserstein}, we provide a generalization of both SGD and Muon's update rules under the framework of Schatten-$p$ norm constrained gradient descent~\cite{bernstein2024old,pethick2025training}. First, let us introduce the framework of Linear Minimization Oracles (LMOs). Given a gradient $G = \nabla_W \mathcal{L}$ with singular value decomposition $U \Sigma V^T$, the LMO aims to solve:

\begin{equation}
    \operatorname{LMO}_\mathcal{D}(G) := \operatorname{argmin}_{\delta W \in \mathcal{D}} \langle G, \delta W \rangle_F
\end{equation}

where $\mathcal{D}$ is the norm ball defined as $\mathcal{D}:=\{x \ | \ \|x\| \leq \rho \}$, with $\rho > 0$ and $\| \cdot \|$ some valid norm~\cite{pethick2025training}. Denoting the Schatten-$p$ norm unit ball as $\mathcal{B}_{S_p} = \{ X \in \mathbb{R}^{m \times n} \ | \ \| X \|_{S_p} \leq 1 \}$, we can express the exact LMO update directions. For any gradient $G \in \mathbb{R}^{m\times n}$, there exist strictly positive scaling factors $\alpha_{\infty}, \alpha_2 \in \mathbb{R}_{>0}$ such that:

\begin{equation}
\begin{aligned}
   \operatorname{argmin}_{\delta W \in \mathcal{B}_{S_{\infty}}} \langle G, \delta W \rangle_F &= - \alpha_{\infty} U V^T \quad &\text{(Muon)} \\
   \operatorname{argmin}_{\delta W \in \mathcal{B}_{S_2}} \langle G, \delta W \rangle_F &= - \alpha_2 U \Sigma V^T \quad &\text{(SGD)}
\end{aligned}
\end{equation}

both the analogous Muon and SGD updates are part of a continuous framework of Schatten constrained norm updates. More generally, we find:

\begin{proposition}
Let $G \in \mathbb{R}^{m \times n}$ have singular value decomposition $G = U\Sigma V^T$, with sorted, positive and nonzero singular values. For any $p \in [1, \infty)$, the solution to the constrained problem, satisfies:
\begin{equation}
    \exists \ \alpha_p \in \mathbb{R}_{>0}, \ s.t. \operatorname{argmin}_{\delta W \in \mathcal{B}_{S_{p+1}}} \langle G, \delta W \rangle_F = -\alpha_p U\Sigma^{1/p}V^T 
    % \min_{\delta W \in \mathbb{R}^{m \times n}} \; \langle G,\, \delta W \rangle_F \quad \text{s.t.} \quad \|\delta W\|_{S_{p+1}} \leq 1 \tag{$\mathcal{P}$}
\end{equation}
% \operatorname{argmin}_{\delta W \in \mathcal{B}_{S_{\infty}}} \langle G, \delta W \rangle_F &= - \alpha_{\infty} U V^T \quad &\text{(Muon)} \\
% In the limit $p \to \infty$, the constraint becomes a spectral norm constraint and the solution recovers $\delta W^\star \propto \operatorname{PolarFactor}(G) = UV^T$.
\end{proposition}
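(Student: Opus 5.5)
The plan is to reduce the matrix problem to a vector problem on singular values via von Neumann's trace inequality, and then solve that vector problem with Hölder's inequality and its equality case. Set $q = p+1$ and let $q^\ast = (p+1)/p$ be the conjugate exponent, so that $1/q + 1/q^\ast = 1$. For any feasible $\delta W$, von Neumann's inequality gives
\[
\langle G, \delta W\rangle_F \ge -\sum_i \sigma_i(G)\,\sigma_i(\delta W),
\]
with equality exactly when $-\delta W$ shares a simultaneous ordered SVD with $G$. Hölder's inequality on the singular value vectors then gives
\[
\sum_i \sigma_i(G)\sigma_i(\delta W) \le \|\sigma(G)\|_{q^\ast}\,\|\sigma(\delta W)\|_{q} \le \|G\|_{S_{q^\ast}},
\]
since $\|\delta W\|_{S_{q}}\le 1$. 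So the minimum value is at least $-\|G\|_{S_{q^\ast}}$, and this recovers the standard fact that the Schatten-$q^\ast$ norm is dual to the Schatten-$q$ norm.

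Next I will show the bound is attained, and identify the unique minimizer. Consider the candidate $\delta W^\ast = -\alpha\, U \Sigma^{s} V^T$ with exponent $s = q^\ast/q = 1/p$. Hölder is tight exactly when $\sigma_i(\delta W)^q \propto \sigma_i(G)^{q^\ast}$, that is, $\sigma_i(\delta W)\propto \sigma_i(G)^{q^\ast-1} = \sigma_i(G)^{1/p}$. The normalization is then
\[
\alpha_p = \|\Sigma^{1/p}\|_{S_{p+1}}^{-1} = \Big(\sum_i \sigma_i^{(p+1)/p}\Big)^{-1/(p+1)} > 0,
\]
which is finite and strictly positive because $G \neq 0$ (the singular values are assumed positive). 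Plugging in gives $\langle G,\delta W^\ast\rangle_F = -\alpha_p \sum_i \sigma_i^{1+1/p} = -\|G\|_{S_{q^\ast}}$, so the lower bound is achieved.

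Finally, so that the $\operatorname{argmin}$ in the statement is a genuine equality rather than just containment, I need uniqueness. For $1<q<\infty$ the $S_q$ ball is strictly convex, so the linear functional has a unique minimizer on it. Equivalently, all three inequalities must be tight for any minimizer: the norm constraint must be active, Hölder must hold with equality, and von Neumann must hold with equality. These force $-\delta W$ to be a nonnegative multiple of $U\Sigma^{1/p}V^T$.

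The main obstacle is the von Neumann equality case when singular values repeat or the matrix is rectangular. There the SVD is not unique, so I have to argue that "$\delta W$ and $G$ admit a simultaneous SVD" pins down $\delta W$ as the same spectral function $U\Sigma^{1/p}V^T$ independently of the choice of $U,V$. I would handle this by noting that $U\Sigma^{1/p}V^T = G(G^TG)^{(1-p)/(2p)}$ on the row space, which is basis-free. Alternatively I could sidestep the issue by invoking strict convexity of the $S_q$ ball for $q = p+1 \in (1,\infty)$, which holds for $p\ge1$, to get uniqueness directly.
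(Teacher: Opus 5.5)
Your proposal is correct and follows essentially the same route as the paper's proof: von Neumann's trace inequality reduces the problem to the singular values, and Hölder's inequality with exponents $(p+1)/p$ and $p+1$, through its equality case, forces $\sigma_i(\delta W)\propto\sigma_i^{1/p}$ with the same normalization $\alpha_p=\bigl(\sum_i\sigma_i^{(p+1)/p}\bigr)^{-1/(p+1)}$. Your extra care about uniqueness tightens a point the paper's proof passes over quickly; you give two ways to secure it, strict convexity of the $S_{p+1}$ ball for $p+1\in[2,\infty)$, or the basis-free form $G(G^TG)^{(1-p)/(2p)}$ to handle non-unique SVDs.
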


% $\exists \ \alpha_p \in \mathbb{R}_{>0} \ s.t \ - \alpha_p U \Sigma^{1/p} V^T = \operatorname{argmin}_{ \delta W \in \mathcal{B}_{S_{p+1}}} \langle G , \delta W \rangle_F$ 
Our proof is given in Appendix~\ref{app:proof-update}) as the general form of Schatten-$(p+1)$ norm updates, admitting SGD ($p=1$) and Muon ($p=\infty$) as extrema.
% Throughout this paper, we index this family by the fractional exponent $p$ appearing in the update $U\Sigma^{1/p}V^T$ rather than by the Schatten order $p+1$ of the constraining norm ball $\mathcal{B}_{S_{p+1}}$.
% so that $p=1$ corresponds to SGD under a Schatten-$2$ constraint and $p\to\infty$ corresponds to Muon under a Schatten-$\infty$ constraint.
% and recover SGD updates when using $S_2$ constraints (i.e. Frobenius norm), and Muon updates when using $S_\infty$ (i.e. Spectral norm) constraints. 
% More generally, we find that $U \Sigma^{1/p} V^T \propto \operatorname{argmax}_{ \delta W \in \mathcal{B}_{S_{p+1}}} \langle G , \delta W \rangle_F$ (see Appendix~\ref{app:proof-update}), recovering $p=1$ the SGD update and $p = \infty$ the Muon update as extremal points.
In this paper, we propose to dynamically adjust the LMO update rule in an adaptive and layer-wise manner to improve the training of deep neural networks. 

\subsection{Random Feature Regression}
\label{sec:rfr}

% In this section, we introduce the theoretical background of random feature regression. Then, we provide a modified context, adapted to handle stochastic first order momenta, in which we provide an optimality proxy for the computation of layerwise optimal $p^*$ values.
% As previously demonstrated in~\Cref{sec:generalization}, there is a continuum of valid steepest descent directions, and many modern optimization methods can be generalized as different induced norm constrained steepest direction updates. 
In this section, we introduce the random feature regression setting from \citet{davis2025spectral}, which the authors use to characterize the regimes that advantage spectral updates (Muon) over Euclidean updates (SGD) by comparing their guaranteed loss decrease under optimizer curvature constraints. 
First, let us define the studied random feature regression setting:

\begin{equation}
    \min_{W \in \mathbb{R}^{m \times n}} \mathcal{L}(W) := \frac{1}{2k} \| WA - Y \|_F^2 
\label{eq:rfr}
\end{equation}

where $W \in \mathbb{R}^{m \times n}$ is the $l$-th layer's weight matrix, $A \in \mathbb{R}^{n \times k}$ is the post-activation matrix of the previous layer and $Y \in \mathbb{R}^{m \times k}$ the expected activation matrix of the $l$-th layer. This model is standard and motivated for analytically tractable simplification of neural network behavior (c.f. \cite{cho2009kernel,daniely2016toward,rahimi2007random,rahimi2008weighted,rudi2017generalization}).
In this setting, for a weight perturbation $\delta W$, the loss can be approximated via:

\begin{equation}
    \mathcal{L}(W+\delta W) - \mathcal{L}(W) = \langle G, \delta W \rangle_F + \frac{1}{2k} \| \delta W . A\|_F^2
\label{eq:loss_decrease}
\end{equation}

In this setting, \citet{davis2025spectral} show that the Euclidean descent update (i.e. SGD) relies on the operator norm inequality $\|\delta W . A\|_F \le \|\delta W \|_F \|A\|_{2}$, whereas the spectral descent case (i.e. Muon) relies on the Frobenius norm inequality $\|\delta W . A\|_F \leq \| \delta W \|_2 \| A \|_F$. 
Ultimately, the authors draw the following conclusion: Spectral updates outperform SGD when the \textit{squared Nuclear-to-Frobenius ratio of the gradient} exceeds the \textit{Stable Rank of the activations} (i.e. $\| G \|_*^2 / \| G \|_F^2 > \| A \|_F^2 / \| A \|_2^2$, with $\| \cdot \|_*$ denoting the nuclear norm). 
Importantly, the insights from the random feature regression model are shown to hold during stochastic training of deep neural networks.

% the authors also conduct vast experimentations in stochastic settings where the random feature regression setting becomes inexact, however, the conclusions remain similar.

\subsection{Related Work}
\label{sec:related}

First-order diagonal methods (Adam~\citep{kingma2015adam},
AdamW~\citep{loshchilov2019decoupledweightdecayregularization}) rescale gradients per-coordinate, ignoring matrix structure. A richer class maintains structured
curvature approximations: Shampoo~\citep{gupta2018shampoo}, SOAP~\citep{vyassoap}, and PSGD~\citep{li2017preconditioned} with its Lie-group extensions~\citep{pooladzandi2024curvatureinformedsgdgeneralpurpose} are properly understood as second-order or quasi-second-order methods that exploit Kronecker-factored Hessian or Fisher statistics. \citet{bernstein2024old} show that after disabling
accumulation, they reduce to steepest descent under specific operator norms, reframing them as norm-aware first-order updates and motivating a principled LMO design space further developed in~\citet{pethick2025training} and \citet{veprikov2025preconditionednormsunifiedframework}. Muon~\citep{jordan2024muon,
liu2025muon} implements this for the spectral norm via polar-factor orthogonalization. Its lack of coordinate-wise adaptivity on ill-conditioned landscapes~\citep{gonon2026insights,dragutinovic2026to} has spurred hybrid variants (MuAdam~\citep{veprikov2025preconditionednormsunifiedframework}, AdaMuon~\citep{si2025adamuon}, NorMuon~\citep{li2025normuon}, MuonEq~\citep{chang2026muoneq}) and fixed-$p$ alternatives~\citep{qi2026delving}. 
\citet{du2026newtonmuonoptimizer} provide the only theoretical justification for a specific LMO choice, by accomodating the update rule of the optimizer step to use a right preconditioning by the  inverse of the activations Gram matrix, they show that $p^\star = \infty$ is optimal under an isotropy assumption on the weights.
We instead derive a closed-form,
layerwise $p^\star$ from quantities any first-order optimizer already tracks without requiring any specific preconditioning, continuously interpolating between SGD and Muon.

%% file: sec/03_rfr_context.tex
\section{Finding the Optimal Geometry}
% \subsection{Handling Stochasticity: Finding a Heuristic Optimality Proxy}
\label{sec:first_order_proxy}

Here, we extend the random feature regression analysis of \citet{davis2025spectral} to derive a proxy-optimality criterion for the Schatten-$(p+1)$ update geometry.

\subsection{Finding an Optimal Update Geometry under First-Order Moments}
\label{sec:first-order-optimal-p}

Rather than analyzing a moment-free single-step update (which would yield a clean theorem about a moment-less optimizer that is not used in practice) we tie the descent guarantee to a first-order momentum buffer and, optionally, an element-wise second-order preconditioner (c.f. \Cref{sec:second-order-moment}). 
\textit{We consider the case where the weight update is tied to a momentum buffer} that aggregates past gradients using an Exponential Moving Average (EMA) mechanism. We introduce the following momentum $M \leftarrow \beta M + (1-\beta) G$, with singular value decomposition $U_M \Sigma_M V_M^T $. The Schatten-$(p+1)$ update, in this case, becomes: $\delta W = - \eta U_M \Sigma_M^{1/p} V_M^T$. Here, our goal is to find $p^*$ such that the loss decrease, independently of scaling is maximized:
% The resulting criterion is therefore conditional on these moments rather than on the raw gradient, and is best understood as a proxy rather than a one-step optimum. % \textit{Our central hypothesis is that, even as a proxy, the single-step random feature regression criterion is informative enough to reveal useful layerwise $p^\star$ values during DNN training.}

\begin{equation}
    \mathcal{L}(W + \delta W) - \mathcal{L}(W) = - \eta \langle G, U_M \Sigma_M^{1/p} V_M^T \rangle_F + \frac{\eta^2}{2k} \| U_M \Sigma_M^{1/p} V_M^T \cdot A\|_F^2,
\label{eq:guaranteed-loss-decrease}
\end{equation}

Using the trace cyclic property to reformulate the first and second term of the right hand side, we provide the following result for the optimal Schatten-$(p+1)$ norm geometry according to the spectral statistics of activations, gradients and momenta.

% defining $C = U_M^T G V_M$ as the alignment matrix between the current gradient and the accumulated momentum singular subspaces, using the cyclic property of the trace, we get:
% 
% \begin{equation}
%     \langle G, U_M \Sigma_M^{1/p} V_M^T \rangle_F = \operatorname{Tr}\left( \Sigma_M^{1/p} C^T \right) =  \sum_i \sigma_{M,i}^{1/p} C_{i,i}
% \end{equation}
% 
% where $C_{i,i} = (U_M^T G V_M)_{i,i}$ are the diagonal entries of $C$. Similarly, we have:
% 
% \begin{equation}
%     \| U_M \Sigma_M^{1/p} V_M^T A \|_F^2 = \operatorname{Tr}(\Sigma_M^{2/p} V_M^T A A^T V_M) = \sum_i \sigma_{M,i}^{2/p} B_{i,i}
% \end{equation}
% 
% with $B_{i,i} = \| A^T V_M e_i \|^2$, the energy of the $i$-th column of $V_M^TA$.
% Putting all of this together allows for the derivation of a guaranteed descent on the random feature regression loss, which can be solved by optimizing $\eta$ in closed form for every candidate $p$. 
% % The resulting criterion is independent of the learning rate: it measures the correlation-to-curvature ratio of the update direction, selecting the optimal spectral geometry while leaving step size as a separate degree of freedom.
% We recover the following result.

\begin{proposition}[Optimal $p^*$ Proxy]
    For all $p \in [1, \infty)$, the value of $p$ that maximizes the guaranteed descent of~\Cref{eq:guaranteed-loss-decrease} under optimal step size the Schatten-$(p+1)$ update can be formulated as:

    \begin{equation}
        p^* = \mathrm{argmax}_{p \in [1, \infty]} \frac{\left(\sum_i C_{i,i} \sigma_{M,i}^{1/p}\right)^2}{\sum_i \sigma_{M,i}^{2/p} B_{i,i}}
        \label{eq:optimal_p_first_order}
    \end{equation}

    with $C_{i,i} = U_M^T G V_M$ and $B_{i,i} = \| A^T V_M e_i\|^2 $ defined previously, $G$ the raw unprojected gradient and $A$ the data flowing inside the layer $l$ that uses weights $W$.
\label{prop:p_star_moment_tight}
\end{proposition}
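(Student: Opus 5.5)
The plan is to reduce \Cref{eq:guaranteed-loss-decrease} to a scalar quadratic in the step size $\eta$ whose coefficients depend on $p$ only through the spectral quantities $C_{i,i}$, $B_{i,i}$ and $\sigma_{M,i}$. Then we minimize in $\eta$ in closed form and read off the $p$-dependent objective.

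\textbf{Step 1 (linear term).} Write $D_p := U_M \Sigma_M^{1/p} V_M^T$. By the cyclic property of the trace,
\begin{equation*}
\langle G, D_p\rangle_F = \operatorname{Tr}\bigl(G^T U_M \Sigma_M^{1/p} V_M^T\bigr) = \operatorname{Tr}\bigl(\Sigma_M^{1/p} V_M^T G^T U_M\bigr) = \operatorname{Tr}\bigl(\Sigma_M^{1/p} C^T\bigr),
\end{equation*}
where $C = U_M^T G V_M$. Since $\Sigma_M^{1/p}$ is diagonal, only the diagonal of $C$ survives, which gives $\sum_i C_{i,i}\sigma_{M,i}^{1/p}$. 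If $U_M, V_M$ are taken from a thin SVD, the sum runs over $i \le \operatorname{rank}$ or $\min(m,n)$, consistently with the definition of $B$.

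\textbf{Step 2 (quadratic term).} Using $U_M^T U_M = I$,
\begin{equation*}
\|D_p A\|_F^2 = \operatorname{Tr}\bigl(A^T V_M \Sigma_M^{2/p} V_M^T A\bigr) = \operatorname{Tr}\bigl(\Sigma_M^{2/p} V_M^T A A^T V_M\bigr).
\end{equation*}
Again only diagonal entries contribute, and $(V_M^T A A^T V_M)_{i,i} = \|A^T V_M e_i\|^2 = B_{i,i}$. Hence
\begin{equation*}
\Delta(\eta,p) = -\eta\, a_p + \frac{\eta^2}{2k}\, b_p, \qquad a_p = \sum_i C_{i,i}\sigma_{M,i}^{1/p}, \qquad b_p = \sum_i \sigma_{M,i}^{2/p} B_{i,i}.
\end{equation*}

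\textbf{Step 3 (optimize $\eta$, then $p$).} For $b_p>0$ the quadratic is convex in $\eta$. Its minimizer is $\eta^\star_p = k a_p / b_p$, with value $\Delta^\star(p) = -\frac{k}{2}\, a_p^2 / b_p$. Maximizing the guaranteed decrease, i.e. minimizing $\Delta^\star(p)$, is therefore equivalent to maximizing $a_p^2/b_p$. The positive constant $k/2$ does not affect the argmax, which yields \Cref{eq:optimal_p_first_order}. Note that $a_p^2/b_p$ is invariant to rescaling $D_p$, so the criterion is indeed independent of the normalization $\alpha_p$ from the earlier Proposition. The case $p=\infty$ is included by continuity, since $\sigma_{M,i}^{1/p}\to 1$ for nonzero singular values.

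\textbf{Main obstacle.} The delicate point is not the algebra but the well-posedness of the argmax. Two issues arise.

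First, $b_p$ must be positive. This fails only if $A^T V_M e_i = 0$ for every $i$ with $\sigma_{M,i}>0$. In that degenerate case $\Delta=-\eta a_p$ with $a_p = 0$, since then $GV_M e_i = \frac1k(WA-Y)A^TV_Me_i = 0$. The objective can thus be set to $0$ and excluded.

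Second, if $a_p<0$ the optimal step $\eta^\star_p$ is negative. The squared form still represents the best achievable decrease when the sign of $\eta$ is free. If instead we restrict to $\eta\ge0$, the criterion should be read as maximizing $a_p^2/b_p$ over $\{p : a_p>0\}$. I would state this convention explicitly.

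Existence of a maximizer over the compact reparametrization $1/p\in[0,1]$ follows from continuity of $a_p^2/b_p$ in $1/p$ whenever $b_p>0$.
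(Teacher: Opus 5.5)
Your proposal is correct and follows the paper's proof essentially step for step. You use the same trace-cyclicity reductions to $\sum_i C_{i,i}\sigma_{M,i}^{1/p}$ and $\sum_i \sigma_{M,i}^{2/p}B_{i,i}$, the same closed-form step $\eta^\star = k\,a_p/b_p$ giving decrease $\tfrac{k}{2}a_p^2/b_p$, and the same argmax over $p$. Your extra remarks are also sound: the degenerate case $b_p=0$, the restriction to $\{p : a_p>0\}$ when $\eta\ge 0$ is enforced, and existence via the compact reparametrization $1/p\in[0,1]$. They actually tidy up a point the paper leaves loose: the paper notes $\eta^\star(p)$ is only attained when $N(p)>0$ but still states the unrestricted squared criterion.
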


Our complete proof is given in Appendix~\ref{app:proof} and the unimodality of the underlying $\operatorname{argmax}$ objective is discussed in~\ref{app:line-search}, along with the stability of the optimal step size in~\ref{app:scaling-factor-update}.
% \textbf{TODO: Add consideration on $C_{i,i}$, scaling, etc...~\ref{cassee}}

\paragraph{Remark} Proposition 1  relies on the random feature regression surrogate model, which is intentionally simplified and therefore approximate in practical deep learning settings. We aim for a simple layerwise rule that selects $p^*$ using only quantities that are already available to a first-order optimizer (gradient, momentum, and activations) and that applies uniformly to every matrix parameter in a deep network without needing knowledge of the definition of the forward pass tied to a parameter.
% per-block customization or Hessian computation. 
% The one-sided activation curvature of \Cref{eq:loss_decrease}, is a heuristically motivated setting in which this is possible: it is the simplest second-order surrogate that captures the interaction between gradient spectra and activation spectra, and is usable layer-by-layer at negligible cost.
% Previously, \citet{davis2025spectral} adopted the same proxy uniformly across architectures (MLPs, transformers, attention projections, MLP hidden layers, ...) and validated the resulting SGD-vs-Muon criterion on full NanoGPT-scale training. We inherit their convention and extend it from a binary regime indicator to a continuous selector over $p \in [1, \infty]$.
% \paragraph{Extension to Preconditioned Updates.} While Proposition 1 focuses on first-order momentum, second-order moments (such as those used in Adam or Shampoo) can be naturally integrated into this framework. Doing so requires the assumption that the local quadratic behavior of the random feature regression model adequately capture the geometry of preconditioned stochastic updates. We detail the adapted computation for $p^*$ under preconditioning in \Cref{app:second_order_p}.

\subsection{Adding Second-Order Moments}
\label{sec:second-order-moment}

In the previous section, we provided a surrogate method for the choice of optimal layer-wise $p^*$ update geometries, allowing for a principled choice of SGD-like or Muon-like update rules. 
However, this interpolation does not only impact the geometry of the update but also its stability properties.
Indeed, in the high $p^*$ regime, Muon enforces a flattening of the singular value spectra, this provides explicit control of update magnitude and anisotropy at matrix level.
As $p^*$ decreases, this spectral constraint is progressively relaxed, and the training process becomes increasingly sensitive to coordinate-wise scale disparities and stochastic gradient noise.
This motivates the incorporation of an explicit variance control mechanism that provides stability in lower $p^*$ regimes.
We retain this benefit by combining $p^*$ as in Proposition 1, and using an Adam-style accumulator $D \leftarrow \beta_2 D + (1-\beta_2) G \odot G$, where $\odot$ denotes the Hadamard product, to scale the variance of updates. The interface between this preconditioner and the derivations of \Cref{sec:first_order_proxy} is examined more in depth in Appendix~\ref{app:second_order_p}. Finally, as argued in~\citet{qi2026delving}, optimization with different update geometries is more stable in the presence of second-order moments, further justifying the necessity for second order moments in lower $p^*$ regimes.

Here, we rely on the unifying work of~\citet{veprikov2025preconditionednormsunifiedframework} to add second order moment preconditioning to optimization methods that rely on norm constrained updates~\cite{pethick2025training}. 
% Starting from the observation that Adam often outperforms SGD in practical training scenarios, w
We propose the following $p$-dependent heuristic to recover Adam updates when $p=1$ and Muon when $p \rightarrow \infty$. 

\begin{equation}
    \delta W = -\eta\, D^{\circ -1/(2(p+1))} \odot \mathrm{LMO}_{(p+1)}\!\left(D^{\circ  -1/(2(p+1))} \odot M\right),
\label{eq:lmo_preconditioned}
\end{equation}

where $\mathrm{LMO}_{(p+1)}(X) \propto U \Sigma^{1/p} V^T$ denotes the Schatten-$(p+1)$ norm LMO update.
Here, the choice of the exponent $1/(2(p+1))$ is the unique choice that: (i) renders the update invariant to global rescaling of the loss; a property as a key desideratum identified by~\cite{veprikov2025preconditionednormsunifiedframework}; (ii) while allowing the update rule to interpolate between that of Adam and that of Muon. 
The derivation of this invariance motivated scaling is given in Appendix~\ref{app:second_order_p}. Throughout this paper, we will denote the LMO update with no second order moments (recovering SGD when $p=1$ and Muon when $p \rightarrow \infty$) as SMuon (standing for Schatten-Muon), while the variance controlled LMO update with Adam updates when $p=1$ that uses $p$-dependent moments will be denoted as SMuon (Adam).
 
% \paragraph{About Convolutional Layers} Here or in Appendix~\ref{app:convolutions}.

% \paragraph{Remark} The criterion of Proposition 1 is intentionally inexact in most practical deep learning scenarios. We aim for a layerwise rule that selects between SGD-like, Muon-like, and intermediate update geometries using only quantities that are already available to a first-order optimizer (gradient, momentum, and activations) and that applies uniformly to every matrix parameter in a deep network without needing per-block customization or Hessian computation. 
% The one-sided activation curvature of \Cref{eq:loss_decrease}, is a heuristically motivated setting in which this is possible: it is the simplest second-order surrogate that captures the interaction between gradient spectra and activation spectra, and is usable layer-by-layer at negligible cost.
% Previously, \citet{davis2025spectral} adopted the same proxy uniformly across architectures (MLPs, transformers, attention projections, MLP hidden layers, ...) and validated the resulting SGD-vs-Muon criterion on full NanoGPT-scale training. We inherit their convention and extend it from a binary regime indicator to a continuous selector over $p \in [1, \infty]$.

%% file: sec/04_estimating_p_star.tex
\section{Estimating Optimal p Values in Practice}
\label{sec:estimator}

\begin{figure}[t!]
    \centering
    \includegraphics[width=0.6\linewidth]{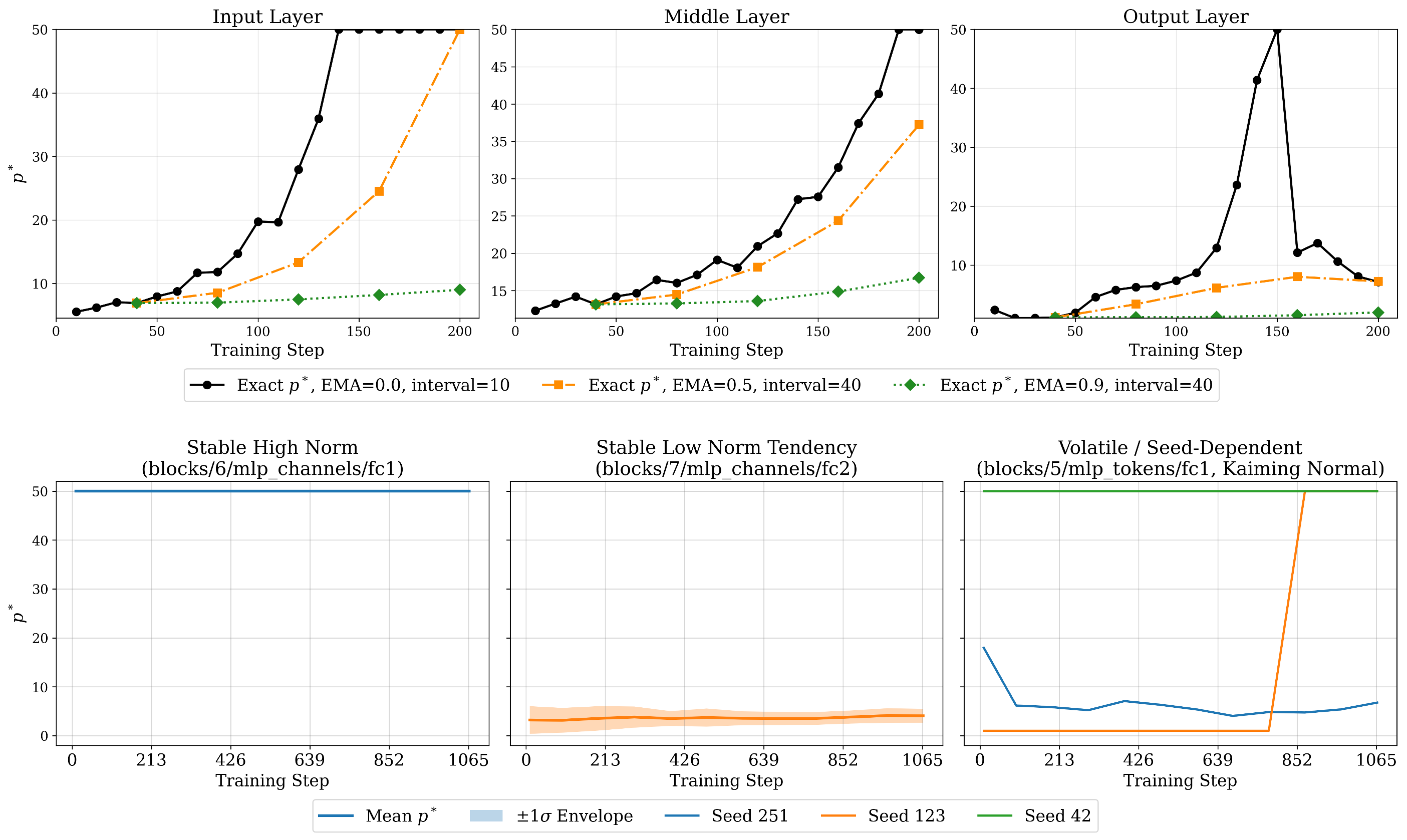}
    \caption{(top) The optimal $p^*$ values on the input, hidden and output layer of an MLP model trained on the MNIST dataset. (bottom) Optimal layerwise $p^*$ values can exhibit stable or volatile behaviour on an MLP-Mixer trained on the ImageNette dataset.}
    \label{fig:p_stability}
    % \vspace{-3mm}
\end{figure}

In this section, we compute optimal $p^*$ values across different settings. Here, we use $p_{\max} = 50$ as the maximum $p^*$ computable value for each layer since $U \Sigma^{1/50} V^T \approx UV^T$ can be considered to be numerically identical to Muon updates for our purposes. These experiments reveal three important phenomena. Firstly, optimal $p^*$ values seem locally stable, with some sudden phase changes. Secondly, layer-wise $p^*$ values with a varying degree of temporal sampling and optimal $p^*$ computations are compatible with the use of different update frequencies depending on the problem. In order to characterize these phase changes in sparser temporal sampling scenarios, we propose the use of an EMA mechanism with strength $\beta_p$ on the underlying quantities of the $p^*$ computation (i.e., $C_{i,i}$, $\sigma_{M,i}$ and $B_{i,i}$).

\paragraph{Why smooth the spectral quantities rather than $p^\star$ ?}
We apply the EMA to $\{C_{i,i}, \sigma_{M,i}, B_{i,i}\}$ rather than directly to $p^\star_t$ for two reasons.
First, $p^\star$ is the argmax of a nonlinear ratio of these quantities, and argmax does not commute with averaging. Therefore, smoothing the inputs and re-solving for $p^\star$ provides a more natural plug-in estimator, while smoothing $p^\star_t$ has no
such interpretation. Second, the empirically observed trajectories of $p^\star$ (Fig.~\ref{fig:p_stability}) exhibit abrupt phase transitions between locally stable regimes; smoothing the inputs lets the underlying state evolve continuously
while $p^\star$ snaps to whichever regime currently dominates, whereas smoothing $p^\star_t$ could drag the estimate
through intermediate values that are themselves suboptimal. The parameter $\beta_p$ then controls a standard
bias--variance tradeoff between minibatch noise and lag behind genuine geometry shifts
balancing the two on the workloads we consider.

\paragraph{Low Frequency $p^*$ Updates (\cref{fig:p_stability}, top)} In a toy experiment, we plot the evolution of the optimal $p^*$ value for an MLP network~\cite{rosenblatt1962principles} trained on 200 steps on the MNIST dataset~\cite{lecun1989handwritten} with learning rate $10^{-2}$ and batch size $4096$. We plot three curves of optimal $p^*$ values for all layer parameters inside the network in~\Cref{fig:p_stability} (top): The exact optimal $p^*$ curve, computed every $10$ steps (i.e., the exact, high frequency update), is plotted as a black line. A lower-frequency $p^*$ estimation, computed every $40$ steps, that uses $\beta_p = 0.5$ to smooth out slight local $p^*$ variations, is plotted in orange. Finally, a smoother estimate, with $\beta_p = 0.9$, that also is computed every $40$ steps is displayed in green. The obtained $p^*$ values are given in~\cref{fig:p_stability} (top). In this setting, we notice stable and predictable evolutions of $p^*$ values for each layers, that can undergo sudden phase changes. Also, we recover that using EMA smoothing on the spectral statistics for the computation of $p^*$ can help smooth out the evolution of $p^*$ throughout training; which is rather practical if we are to update the layers with $p^*$ dependent update rules, as we don't want the update rule to fluctuate too randomly during training. Finally, we see that for a rightly chosen EMA strength --- here $0.5$, in a rapidly decreasing loss landscape where the optimization geometry shifts rather quickly --- the optimal $p^*$ value can be approximated stably and with lower-frequency computations, although it can lag behind the optimal $p^*$ value computed with higher frequency and no EMA. 

\paragraph{Stability of $p^*$ Values for Deep Neural Networks} In a larger scale experiment, we train an MLP-Mixer Tiny model with patch size 16 from the \texttt{timm} library~\cite{rw2019timm} on the ImageNette dataset for 20 epochs. During the training process, we use an adaptive optimizer that computes $p^*$ on the end of every epoch with an EMA of $0.95$. We initialize $p^* = p_{\max}$, effectively starting the training with the Muon update, then, we apply $U \Sigma^{1/p^*_l} V^T$ updates to the matrix parameters for every layer $l$ of proxy optimality parameter $p^*_l$. We record three runs with different random seeds for four different initialization methods, namely \texttt{orthogonal}, \texttt{kaiming\_uniform}, \texttt{kaiming\_normal}, and \texttt{truncated\_normal}~\cite{glorot10a,he2015delving,huorthogonal}. Our observations are threefold.

First, our proxy-optimality $p^*$ criterion can be stable and point towards spectral updates as being optimal as demonstrated in \Cref{fig:p_stability} (bottom), left panel, where we notice that the first linear layer of the MLP channel parameter for block 5 of the MLP mixer systematically recommends $p^* = p_{\max}$ during the whole training process across all experiments.
% This observation is robustly repeated across all different random seeds and initialization methods.
Secondly, the proxy $p^*$ criterion can also point towards lower $p^*$ values consistently across all runs and initialization methods, as shown on the second layer of the 7th block of the Mixer model (\Cref{fig:p_stability}, bottom - middle), where $p^* \ll p_{\max}$ consistently across experiments.
Finally, while the $p^*$ optimal value can be stable in randomized settings for certain layers, some other layers exhibit run-dependent characteristics, even for the same initialization method, as shown in \Cref{fig:p_stability}, (bottom - right) where the same Kaiming Normal initialization method can yield different optimal $p^*$ trajectories for the same parameter matrix.

For the rest of the paper, we will use fixed frequency $p^*$ updates with EMA values in $[0.9, 0.99]$ depending on the neural network. Further ablations on the EMA value and update frequency parameters can be found in Appendix~\ref{app:p_stability}, along $p^*$ visualizations from NanoGPT style runs. Additionnally, for runtime improvements, we provide the theoretical machinery for a $\operatorname{top-k}$ SVD based method of the $p^*$ estimation, using the foundational work of~\citet{halko2011finding}, presented in~\Cref{app:approximate-p}.

%% file: sec/05_fractional_update.tex
\section{Efficiently Approximating the Fractional Polar Factor}
\label{sec:pth_root}

Practically, Schatten-$p$ norm updates are built around what we term the ``fractional map direction'', defined by the functional: $\mathcal{F}_p : G \;\longmapsto\; U \Sigma^{1/p} V^T$, when $G = U \Sigma V^T$.

This map smoothly interpolates between vanilla SGD at $p = 1$ and the Muon orthogonalization step as $p \to \infty$. Computing $\mathcal{F}_p$ exactly via the Singular Value Decomposition (SVD) is prohibitive at scale. Although its asymptotic complexity matches matrix multiplication, SVD is intrinsically sequential, lacks tensor-core acceleration, and severely underutilizes modern GPU and TPU architectures. To resolve this, we require a matrix-multiplication-centric approximation. 
Below, we detail our primary algorithmic contribution: a highly efficient Taylor approximation method built atop any polar factor estimation method. We also briefly discuss alternative polynomial approaches and concurrent methods to contextualize the accuracy-throughput trade-off.

\paragraph{Taylor Approximation via Polar Factor Estimation.}
Our approach leverages the polar factor matrix along with the original update matrix to construct an accurate, hardware-efficient approximation of $\mathcal{F}_p$. This method directly benefits from recent advancements in Newton-Schulz iterations and Gram Newton-Schulz methods~\cite{amsel2025polar,ahn2025dion,GramNewtonSchulz,grishina2025accelerating}.

\begin{proposition}[Fractional Update Estimate]
    For any matrix $G \in \mathbb{R}^{m \times n}$ of singular value decomposition $U \Sigma V^T$ and polar factor $P = UV^T$, we can write:

    \begin{equation}
        U \Sigma^{1/p} V^T = \alpha^{1/p}  \sum_{i=0}^\infty \binom{1/p}{i} \left(\frac{1}{\alpha} G P^T - I\right)^i P
    \end{equation}

    for all finite $\alpha \geq \| G \|_2$. 
    % with $M = \sum_{i=0}^\infty \binom{1/p}{i} (\frac{1}{\alpha} X (UV^T)^T - I)^i$
\label{prop:matrix_approx}
\end{proposition}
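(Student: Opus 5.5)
The plan is to reduce the matrix series to a family of scalar binomial series, one per singular value. The key observation is that $GP^T$ is a symmetric positive semidefinite matrix diagonalized by $U$. Working with the thin SVD $G = U\Sigma V^T$, where $U \in \mathbb{R}^{m\times r}$ and $V \in \mathbb{R}^{n\times r}$ have orthonormal columns and $\Sigma$ holds the strictly positive singular values, we have $V^TV = I_r$. Hence
\begin{equation*}
GP^T = U\Sigma V^T V U^T = U\Sigma U^T .
\end{equation*}
Since $UU^T$ is only a projector when $r<m$, I would write
\begin{equation*}
\tfrac{1}{\alpha}GP^T - I = U\left(\tfrac{1}{\alpha}\Sigma - I_r\right)U^T - \left(I - UU^T\right).
\end{equation*}
The two summands act on the orthogonal subspaces $\mathrm{range}(U)$ and $\mathrm{range}(U)^\perp$, so their cross products vanish. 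By induction, for every $i\geq 0$,
\begin{equation*}
\left(\tfrac{1}{\alpha}GP^T - I\right)^i = U\left(\tfrac{1}{\alpha}\Sigma - I_r\right)^iU^T + (-1)^i\left(I-UU^T\right).
\end{equation*}

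Next, right-multiply by $P = UV^T$. The second term is annihilated because $(I-UU^T)U = 0$, and the first term gives $U(\tfrac{1}{\alpha}\Sigma - I_r)^iV^T$. Summing termwise, the right-hand side of Proposition~\ref{prop:matrix_approx} becomes
\begin{equation*}
\alpha^{1/p}\, U\, \mathrm{diag}\!\left(\sum_{i\ge 0}\binom{1/p}{i}x_j^i\right) V^T, \qquad x_j = \frac{\sigma_j}{\alpha}-1 .
\end{equation*}
By the scalar binomial theorem, $\sum_i \binom{1/p}{i}x_j^i = (1+x_j)^{1/p} = (\sigma_j/\alpha)^{1/p}$. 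Multiplying by $\alpha^{1/p}$ yields $\sigma_j^{1/p}$, which is exactly $U\Sigma^{1/p}V^T$.

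The main obstacle is justifying convergence, both of the scalar series and of the matrix series, so that termwise summation is legitimate. The hypothesis $\alpha \geq \|G\|_2 = \sigma_1$ gives $x_j \in (-1, 0]$ for every positive $\sigma_j$. The binomial series for $(1+x)^{1/p}$ has radius of convergence $1$, so each scalar series converges absolutely. I would also remark that it converges absolutely even at $x=-1$ because $1/p>0$: the coefficients satisfy $|\binom{1/p}{i}| = O(i^{-1-1/p})$. This means the argument is robust even if one uses a full SVD containing zero singular values.

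For the matrix series, the partial sums equal $U\,\mathrm{diag}(S_N(x_j))\,V^T$ exactly, by the computation above. They therefore converge in any norm, since $\|U\,\mathrm{diag}(d)\,V^T\|_F = \|d\|_2$ and there are finitely many $j$. This removes any need for a Neumann-type bound on $\|\tfrac{1}{\alpha}GP^T - I\|_2$, which can equal $1$ when $r<m$ and would otherwise make a direct operator-norm convergence argument fail.
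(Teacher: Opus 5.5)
Your proof is correct and is essentially the paper's argument. Both rest on $GP^T = U\Sigma U^T$ and reduce the identity to the scalar binomial series $(1+x_j)^{1/p}$ with $x_j = \sigma_j/\alpha - 1$: the paper writes $U\Sigma^{1/p}V^T = \alpha^{1/p}Z^{1/p}P$ with $Z = \tfrac{1}{\alpha}GP^T = I+E$ and expands $(I+E)^{1/p}$ without comment, while you compute $E^iP$ directly in the singular bases.

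What you add is rigor the paper omits, namely the $I - UU^T$ block when $r<m$ and an actual convergence argument. Your closing remark does overstate the obstruction, though. Since $\sum_i |\binom{1/p}{i}| < \infty$, the matrix series converges in operator norm even when $\|\tfrac{1}{\alpha}GP^T - I\|_2 = 1$.
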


Our proof is given in~\Cref{app:taylor_approximate}. From any chosen Newton-Schulz implementation, we reliably extract two quantities without resorting to SVD: an upper bound of the spectral norm $\alpha = \|G\|_{S_i}$ (with $i \in \{2, 4\}$ depending on the specific kernel variant~\cite{grishina2025accelerating}), and the polar factor estimate $P \approx U V^T$. 
Importantly, this method only necessitates matrix multiplications applied after the Newton-Schulz routine, which can be made into a nested fused multiply add sequence via Horner's rule. In practice, the truncation order of the binomial expansion of~\Cref{prop:matrix_approx} can be treated adaptively, chosen as the minimal degree required for the scalar worst-case error on a logarithmic grid within $[\varepsilon, 1]$ to fall below a predefined user tolerance.
Experimentally, our method places on the dominant, lower-left Pareto frontier (see \Cref{fig:matrix_approximation}), providing one to two orders of magnitude higher precision than pure polynomial methods at comparable wall-clock time. Furthermore, its precision degrades gracefully for large values of $p$ as shown in~\Cref{app:maximal_p}.

\begin{figure}[t]
    \centering
    \includegraphics[width=0.5\linewidth]{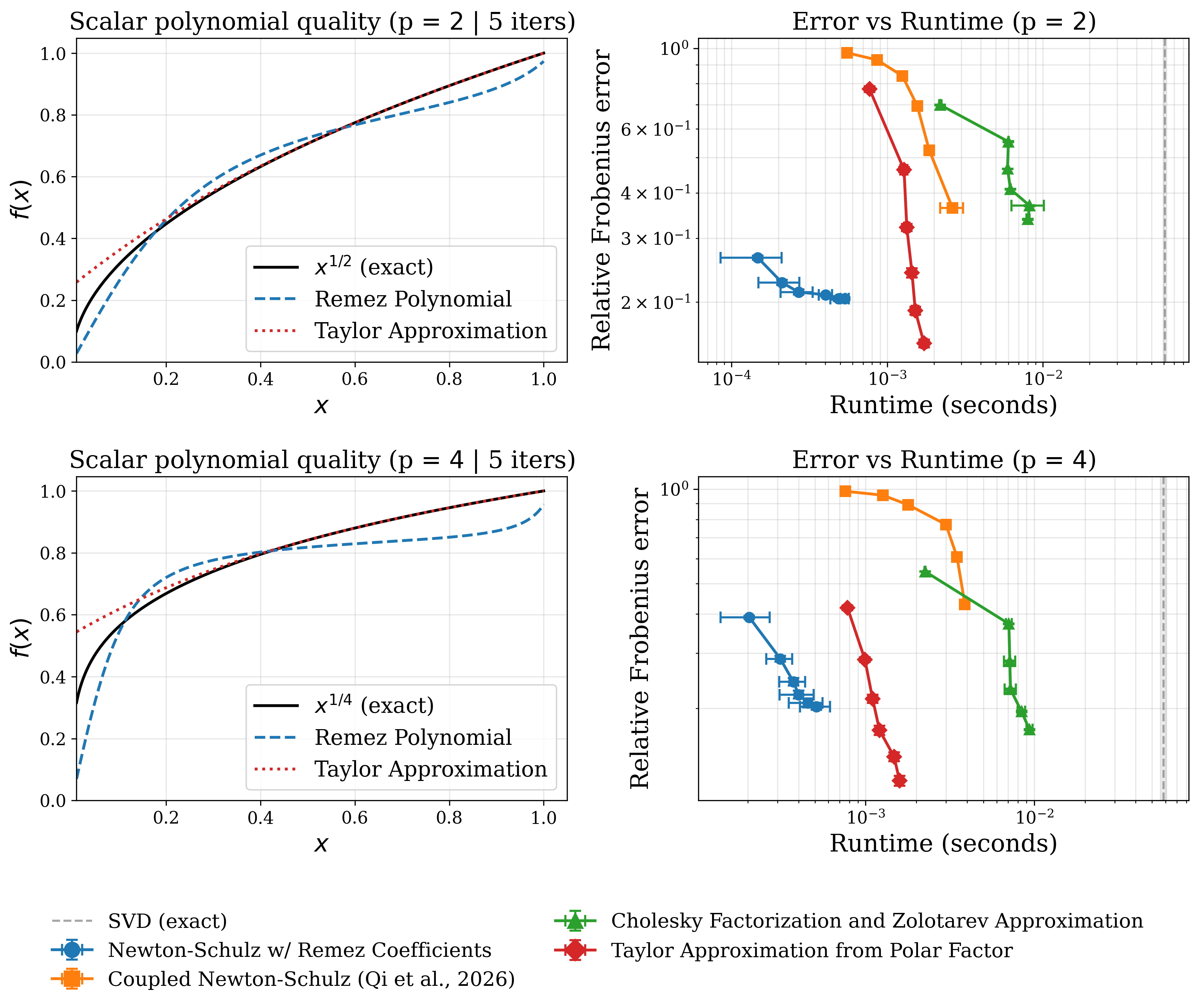}
    \caption{
    Scalar approximation quality (\emph{left}) and wall-clock error-vs-runtime Pareto frontiers (\emph{right}) for $p \in \{2, 4\}$ on $1024 \times 1024$ matrices drawn from a random heavy tailed distribution with condition numbers $[10, 10^3]$ (25 trials, single RTX 4090 GPU). Iterative methods sweep matmul iterative budgets $T \in \{1, \dots, 6\}$; the dashed vertical line marks the exact SVD baseline. 
    % The Taylor approximation using the PolarFactor strongly dominates the Pareto frontier, trading a marginal increase in polar evaluation for significantly higher precision. The Remez Newton-Schulz baseline remains the fastest but plateaus near $2\times 10^{-1}$ relative error.
    }
    \label{fig:matrix_approximation}
    % \vspace{-5mm}
\end{figure}

\paragraph{Concurrent and alternative methods.}
Bypassing the polar factor, one can target $f(x) = x^{1/p}$ directly with Newton-Schulz-style iterations using Remez-fitted odd-degree polynomials. This achieves the fastest absolute runtime (Figure~\ref{fig:matrix_approximation}, blue), matching the default Newton-Schulz orthogonalization routine. However, the curvature of $x^{1/p}$ near the origin causes the polynomial approximation to degrade and the $L_\infty$ error degrades sharply for $p \geq 10$ (Appendix~\ref{app:remez}). \citet{qi2026delving} couple Newton-Schulz recursions to extract $A^{1/2}$ and $A^{-1/2}$ jointly, achieving high accuracy at the cost of doubled working memory and a restriction to dyadic exponents $p = 2^k$, while requiring $k$ nested passes through the coupled Newton-Schulz routine. Zolotarev-based reductions of the fractional factor to $k$ shifted resolvents $(G^T G + s_j I)^{-1}$ accommodate arbitrary $p$ via batched Cholesky, but forfeit the matmul throughput of tensor-core-optimized polynomials, rendering them impractical at training scale (Figure~\ref{fig:matrix_approximation}, green).

\paragraph{About Optimization Quality} It is worth noting that perfectly resolving $x^{1/p}$ is not strictly necessary for optimization efficacy. On a standard iteration budget, our Taylor expansion effectively bridges the approximation gap, reaching errors well below the $L_\infty \approx 3.2 \times 10^{-1}$ error of stock Muon coefficients\footnote{As defined in https://github.com/KellerJordan/Muon/blob/master/muon.py} on $[0.02, 1.0]$. Also, importantly, as demonstrated by \citet{gonon2026insights}, moderate spectral error in this regime frequently acts as an implicit regularizer, surprisingly improving performance. Thus, we select the Taylor-Horner method which provides the optimal balance of accelerator-friendly throughput and sufficient theoretical precision.

%% file: sec/06_experiments.tex
\begin{figure}[t!]
    \centering
    \begin{subfigure}[t]{0.53\linewidth}
        \centering
        \includegraphics[width=\linewidth]{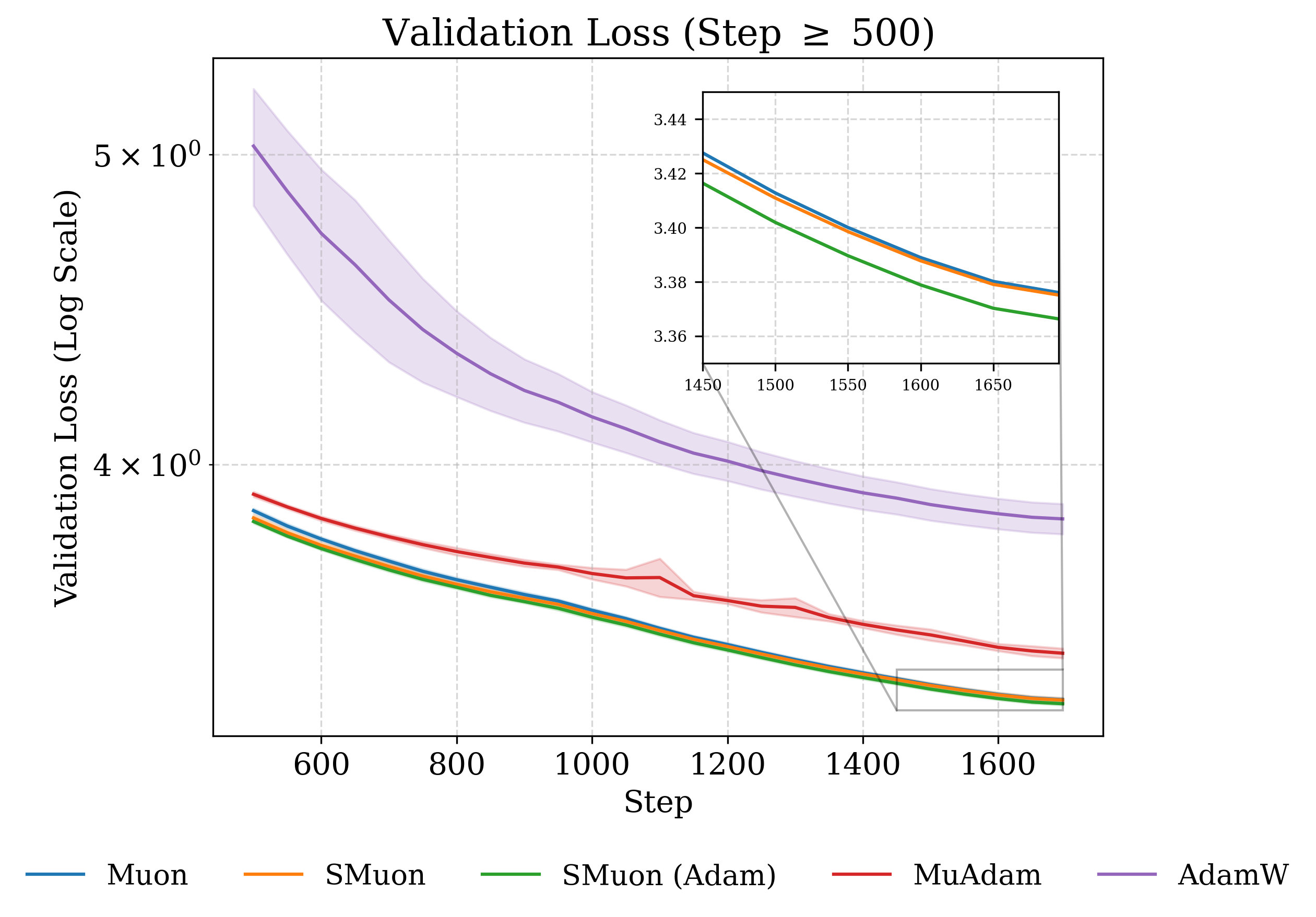}
        \caption{We plot the validation loss value across training steps on the NanoGPT speedrun across 6 different random seeds.  The validation loss is computed on context windows of $262 144$ tokens, which explains the gap with the numbers from ~\cite{du2026newtonmuonoptimizer} which uses half that size.}
        \label{fig:nanogpt}
    \end{subfigure}
    \hfill
    \begin{subfigure}[t]{0.45\linewidth}
        \centering
        \includegraphics[width=\linewidth]{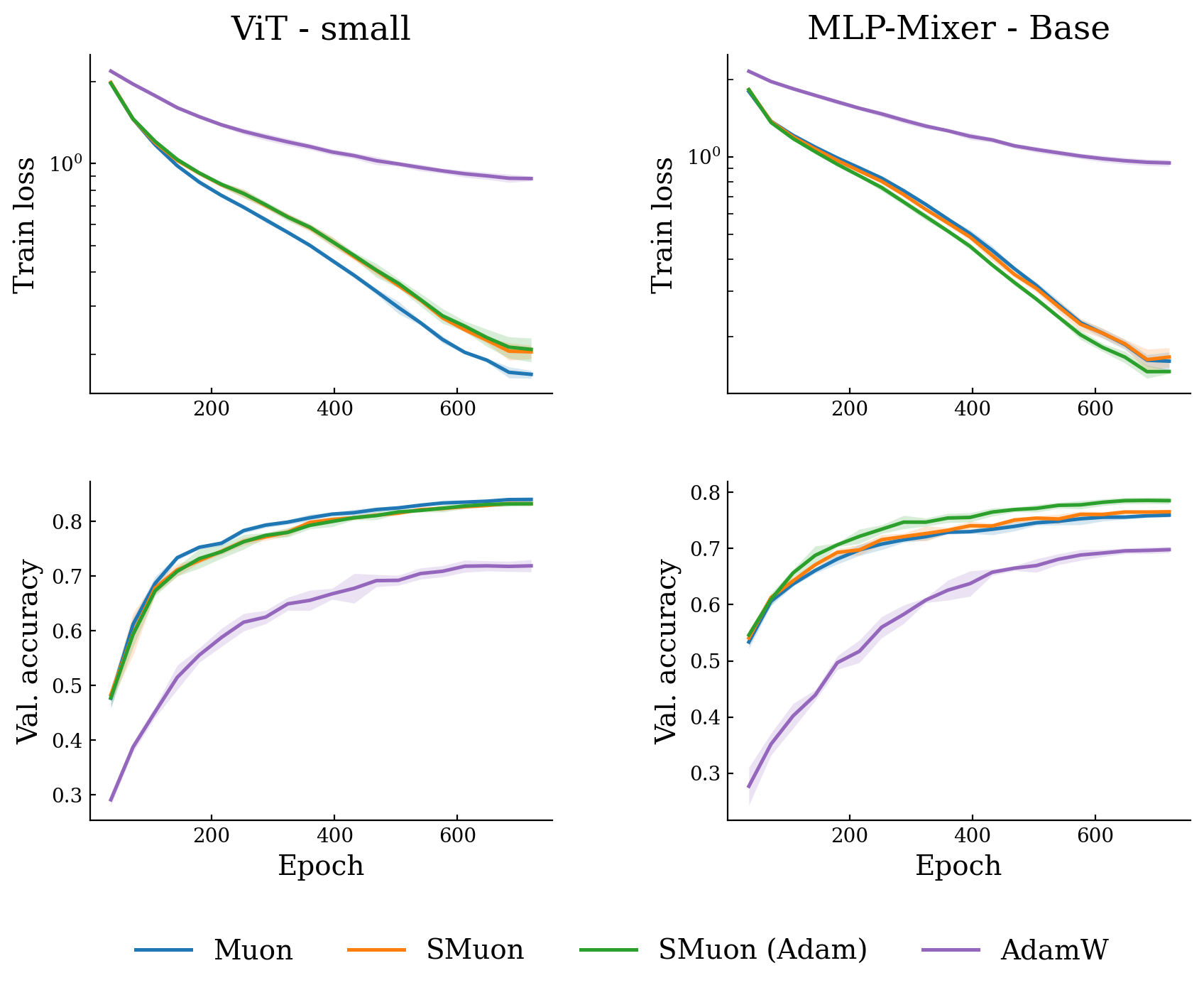}
        \caption{
            We plot the training loss and validation accuracy for each optimizer across 20 epochs on the ImageNette dataset, for both a ViT - Small and an MLP-Mixer - Small models.
        }
        \label{fig:vision}
    \end{subfigure}
    \caption{Training results across language modeling and vision tasks.}
    \label{fig:combined}
    % \vspace{-3mm}
\end{figure}

% \vspace{-3mm}

\section{Experimental Observations}
\label{sec:experiments}

We evaluate SMuon across three settings selected to stress different parts of the framework: language model pre-training where Muon excels, a mixed-regime vision task, and low-rank fine-tuning where it appears that a purely spectral geometry update is ill-posed for optimization. SMuon will be compared by evaluating the training loss and the validation accuracy, which are common metrics used for optimizer assessment. Each setting reports mean $\pm$ std across seeds, and hyperparameter configurations can be found in Appendix~\ref{app:hyperparams}. 
% AdamW is omitted from the main figures---on these workloads its gap to Muon is large enough to compress the $y$-axis---and reported in full in Appendix~\ref{app:adamw}.

\textbf{Language model pre-training.} We use the Modded-NanoGPT speedrun \cite{modded_nanogpt_2024} on 2$\times$ NVIDIA A100s GPUs, training a GPT-2-scale model on FineWeb \cite{penedo2024fineweb}. To avoid an architecture co-tuned with Muon, we adopt an older checkpoint close to record $\#28$\footnote{\href{https://github.com/KellerJordan/modded-nanogpt/blob/9d9dc96/train_gpt.py}{NanoGPT checkpoint URL}}, as argued in~\citet{du2026newtonmuonoptimizer}, modified to support our optimizer. All optimizers run with Muon's tuned hyperparameters except MuAdam, for which we sweep the learning rate over six multipliers of Muon's value; AdamW uses the base learning rate found optimal for Muon's auxiliary parameters. SMuon recomputes $p^\star$ every 100 steps for $\sim3\%$ overhead. We report mean $\pm$ std across 6 seeds (Figure 3a).

The five optimizers separate along the two axes of our framework, LMO geometry and elementwise preconditioning. AdamW lags substantially, as it only treats weight coordinate-wise and ignores the matrix geometry that the transformer architecture rewards. MuAdam recovers part of this gap by composing an Adam-style preconditioner with the Schatten-$\infty$ LMO \cite{veprikov2025preconditionednormsunifiedframework}, but performs worse than Muon. Muon is our reference, as it is the optimizer of the underlying speedrun record. SMuon ties Muon with a marginal numerical advantage but no clear statistical trend; this is not surprising as we find that layerwise $p^*$ values are generally close to $p_{\max}$.
On the six seeds run with both optimizers, SMuon (Adam) improves over Muon on every seed ($3.366 \pm 3.1 \cdot 10^{-3}$ vs $3.376 \pm 4.3 \cdot 10^{-3}$), with a mean per-seed reduction of $9.7 \cdot 10^{-3}$; a paired $t$-test on these matched runs rejects the null of equal means at $p < 10^{-3}$ ($t = 6.89$, $5$ degrees of freedom).
The advantage is largest early in training, when layerwise $p^\star$ on the attention matrices drops sharply within the first $\sim$100 steps, and SMuon (Adam) is alone in supplying both the requested low-$p$ LMO geometry and the elementwise variance control that low-$p$ updates demand. As training progresses and attention $p^\star$ climbs back toward $p_\mathrm{max}$, SMuon (Adam) becomes increasingly Muon-like and the gap narrows monotonically, likely reflecting the natural flattening of the loss curve as training approaches its target. At equal wall-clock time, accounting for the $\sim$3\% overhead, SMuon (Adam) reaches $3.370 \pm 5\cdot 10^{-3}$ versus Muon's $3.376 \pm 4\cdot 10^{-3}$, retaining the advantage.

\paragraph{Vision.}
We train a ViT-Small and an MLP-Mixer - Base model with a patch size of 16 from \texttt{timm}~\citep{rw2019timm} on ImageNette~\citep{Howard_Imagenette_2019} for $20$ epochs (batch $256$, $3$ seeds, $\eta \in \{0.1, 0.3, 1, 3, 5, 10, 20\} {\cdot} 10^{-3}$). With $p^\star$ recomputed once per epoch ($\beta_p = 0.95$, $\sim 0.6\%$ overhead on a single RTX 4090 GPU), we plot the average train loss and validation accuracy across random seeds for the best performing learning rate configuration. 
On the MLP-Mixer - Base model, we observe that SMuon matches the performance of Muon in terms of train loss, while SMuon (Adam) exceeds both, with AdamW showing worse performance. This result consolidates our case for adaptive updates with $p$-dependent preconditioning for variance controlled updates.
Interestingly, on the ViT-Small model, we notice that Muon demonstrates superior training loss and validation accuracy to both SMuon variants, while Adam still lags behind on all counts.
This paints a more nuanced picture, as it appears that using SMuon yields equal performance or marginal performance gains w.r.t Muon in some scenarios. 
\vspace{-2mm}

% SMuon variants match or marginally exceed Muon (Figure~\ref{fig:vision}). The layer-wise $p^\star$ trajectories of \S\ref{sec:estimator} reveal the selector recovering Muon-like behavior on the layers where it is locally optimal and deviating when necessary. On the MLP-Mixer - small model 

%\paragraph{Low-rank fine-tuning.}
%To probe a regime where Muon's spectral geometry is \emph{a priori} suboptimal, we fine-tune Qwen-2.5-0.5B-Instruct~\citep{qwen2} on GSM8K~\citep{cobbe2021training} with rank-$32$ LoRA adapters~\citep{hulora} over $4$-bit quantized weights~\citep{dettmers2023case}, for $5$ epochs with cosine schedule. Across $3$ seeds (Table~\ref{tab:lora}), 

\paragraph{Low-rank fine-tuning.}
To probe a regime where Muon's full-spectrum spectral geometry is \emph{a priori} suboptimal, we fine-tune Qwen-2.5-0.5B-Instruct~\citep{qwen2} on GSM8K~\citep{cobbe2021training} with rank-$32$ LoRA adapters~\citep{hulora} over $4$-bit quantized weights~\citep{dettmers2023case} for $3$ epochs. The rank-$32$ structure of the LoRA factors means the polar factor of the momentum matrix has at most $32$ nonzero singular values, and orthogonalization across this degenerate spectrum amplifies the directions corresponding to small singular values, which are a priori noisy. The selector should therefore prefer intermediate $p^\star$ on these layers, and indeed we observe an average $p^\star$ across layers and training of $2.75$ (SMuon) and $3.10$ (SMuon (Adam)), well below $p_{\max} = 50$. Results across $3$ seeds are reported in~\Cref{tab:lora}. Here, we use a $p^*$ computation every epoch with $\beta_p = 0.95$. The picture is relatively clear, with variance controlled optimizers (via orthogonalization or second-order moments), outperforming their counterparts. 
SGD trails all configurations, while SMuon does slightly better than SGD but still substantially worse than other configurations; justifying the variance controlled adaptive LMO scheme of~\cref{sec:second-order-moment}.
Finally, we note that Muon performs worse than AdamW and SMuon (Adam) due to its rigid geometry constraints. 
At last, AdamW outperforms all configurations, closely followed by SMuon (Adam), that permits an adaptive choice of LMO in the correct regime (low $p^*$) along with variance controlled updates even when $p^*$ is low.
\begin{table}[h]
\centering
\caption{Optimizer comparison (mean $\pm$ std over 3 seeds).}
\setlength{\tabcolsep}{4pt} % Reduces the horizontal space between columns
\small % Slightly reduces the font size of the table contents
\begin{tabular}{lrrrrr}
\toprule
 & \textbf{SGD} & \textbf{AdamW} & \textbf{Muon} & \textbf{SMuon} & \textbf{SMuon (Adam)} \\
\midrule
Train loss & $0.736 \pm 7 \cdot 10^{-3}$ & $\mathbf{0.337 \pm 2 \cdot 10^{-3}}$ & $0.387 \pm 2 \cdot 10^{-3}$ & $0.464 \pm 3 \cdot 10^{-3}$ & $0.340 \pm 1 \cdot 10^{-3}$ \\
Val acc.   & $22.38 \pm 6 \cdot 10^{-3}$ & $\mathbf{29.67 \pm 1 \cdot 10^{-3}}$ & $28.74 \pm 7 \cdot 10^{-3}$ & $28.18 \pm 2 \cdot 10^{-3}$ & $29.37 \pm 7 \cdot 10^{-3}$ \\
\bottomrule
\end{tabular}
\label{tab:lora}
% \vspace{-3mm}
\end{table}

%% file: sec/07_conclusion.tex
\section{Conclusions, Limitations and Future Work}
\label{sec:conclusion}

We introduced a principled framework for adaptive optimization based on Schatten-$p$ norm constrained descent, recovering SGD and Muon as the endpoints $p{=}1$ and $p{\to}\infty$ of a continuous family of update geometries. Using a single-step random feature regression proxy adapted to first-order momentum, we derived a closed-form, layerwise selector $p^\star$ from quantities a standard optimizer already maintains, and paired it with an efficient matmul-centric Taylor approximator for the fractional polar factor $U\Sigma^{1/p}V^T$, built atop of state-of-the-art Newton--Schulz kernels. The resulting optimizer, SMuon (Adam), outperforms a heavily-tuned Muon baseline on Modded-NanoGPT and shows competitiveness on vision workloads at sub-percent overhead. Also, in voluntarily adversarial settings for Muon, we show that our proxy imposes Adam like updates thus staying very competitive with AdamW on low-rank fine-tuning and completely outperforming Muon in terms of training loss. 
% ---showing that our theory predicts correct behaviour in settings voluntarily adversarial to the use of Muon.

% \paragraph{Limitations and future work.}
Three directions follow naturally from our framework as future optimizations. 
Firstly, the random feature regression model is, by design, the simplest second-order surrogate that captures the gradient-activation interaction; richer surrogates that account for downstream Jacobians (e.g.\ for attention projections or LoRA factors) could yield sharper selectors at the cost of locality~\cite{li2017preconditioned}, and we view the trade-off between proxy fidelity and per-layer plug-and-play deployment as an open empirical question. 
Secondly, our derivation pins the descent guarantee to a single optimizer step. Extending the criterion to multi-step horizons and accounting for the trajectory of $p^\star$ under the induced dynamics rather than its instantaneous optimum would tighten the connection between the proxy and the long-run training loss (see~\Cref{app:retroaction}). 
% , and may explain the phase transitions in $p^\star$ we observe empirically (\S\ref{sec:estimator}). 
Lastly, and perhaps most importantly, revealing repeatable and robust $p^*$ behaviors during training, and showing that these $p^*$ value dynamics can be repeated or predicted on larger models~\cite{kaplan2020scaling} could allow practitioners to choose predicted-optimal update geometries $p^*$ on larger networks layers, without paying the cost of $p^*$ approximation. 

% \paragraph{Broader impact.}
% Our contributions are methodological and target the optimization of deep neural networks. The primary effect of more efficient optimizers is to reduce the compute and energy cost of training at a given quality target, which we view as broadly positive. The same efficiency, however, lowers the cost of training large models in general, and accordingly inherits the dual-use considerations of foundation model training at scale. We do not foresee specific risks from this work beyond those already present in the optimization literature it builds on.

%% file: sec/08_acknowledgements.tex
\section{Acknowledgements}

The authors want to thank Louis B\'ethune and Thibaut Boissin for their insightful comments throughout this papers writing process. 

Our work has benefited from the AI Cluster ANITI and the research program DEEL.\footnote{\url{https://www.deel.ai/}} ANITI is funded by the France 2030 program under the Grant agreement n°ANR-23-IACL-0002. DEEL is an integrative program of the AI Cluster ANITI, designed and operated jointly with IRT Saint Exupéry, with the financial support from its industrial and academic partners and the France 2030 program under the Grant agreement n°ANR-10-AIRT-01. This project was provided with computing and storage resources by GENCI at IDRIS thanks to the grant 2025-AD011016850 on the supercomputer Jean Zay's A100 and H100 partition .

%% file: appendix/a_proofs.tex
\section{Proofs}
\label{app:proof}

\subsection{The Schatten-p Norm Constrained Update Direction}
\label{app:proof-update}

\begin{proposition}
Let $G \in \mathbb{R}^{m \times n}$ have SVD $G = U\Sigma V^T$, with $\sigma_1 \geq \cdots \geq \sigma_r > 0$ the nonzero singular values. For any $p \in [1, \infty)$, the solution to the constrained problem
\begin{equation}
    \min_{\delta W \in \mathbb{R}^{m \times n}} \; \langle G,\, \delta W \rangle_F \quad \text{s.t.} \quad \|\delta W\|_{S_{p+1}} \leq 1 \tag{$\mathcal{P}$}
\end{equation}
satisfies $\delta W^\star \propto U\Sigma^{1/p}V^T$. In the limit $p \to \infty$, the constraint becomes a spectral norm constraint and the solution recovers $\delta W^\star \propto \operatorname{PolarFactor}(G) = UV^T$.
\end{proposition}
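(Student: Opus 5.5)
The plan is to use Hölder duality for Schatten norms, via von Neumann's trace inequality. Set $q = p+1$ and let $q' = (p+1)/p$ be the conjugate exponent, so that $1/q + 1/q' = 1$. For any $\delta W$ with singular values $s_1 \ge s_2 \ge \cdots$, von Neumann's trace inequality gives
\begin{equation*}
|\langle G, \delta W\rangle_F| \le \sum_i \sigma_i s_i .
\end{equation*}
Scalar Hölder then gives $\sum_i \sigma_i s_i \le \|\Sigma\|_{\ell_{q'}} \|s\|_{\ell_q} \le \|G\|_{S_{q'}}$ on the unit ball $\|\delta W\|_{S_{p+1}} \le 1$. Hence the optimal value of $(\mathcal{P})$ is at least $-\|G\|_{S_{q'}}$.

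Next I exhibit a point attaining this bound. Take the candidate
\begin{equation*}
\delta W^\star = -c\, U \Sigma^{1/p} V^T, \qquad c = \|\Sigma^{1/p}\|_{\ell_{p+1}}^{-1} = \Big(\sum_i \sigma_i^{(p+1)/p}\Big)^{-1/(p+1)} .
\end{equation*}
Then $\|\delta W^\star\|_{S_{p+1}} = 1$. Moreover
\begin{equation*}
\langle G, \delta W^\star\rangle_F = -c \operatorname{Tr}(\Sigma^{1+1/p}) = -c \sum_i \sigma_i^{q'} = -\|G\|_{S_{q'}} ,
\end{equation*}
since $c\sum_i \sigma_i^{q'} = (\sum_i \sigma_i^{q'})^{1-1/q} = (\sum_i\sigma_i^{q'})^{1/q'}$. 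So $\delta W^\star$ is a minimizer, and it is proportional to $U\Sigma^{1/p}V^T$ with $\alpha_p = c > 0$.

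I expect uniqueness, which justifies ``the solution'', to be the main obstacle. For $p \in (1,\infty)$ (i.e.\ $q > 2$) and for $p=1$ (Frobenius), the $S_q$ ball is strictly convex. A linear functional with $G \ne 0$ is therefore minimized at a unique boundary point, because two distinct minimizers would have their midpoint in the interior, contradicting optimality. Strict convexity of the Schatten-$q$ norm for $1<q<\infty$ is standard, and I will cite it rather than reprove it. An alternative is to argue directly through the equality cases: equality in Hölder forces $s_i \propto \sigma_i^{q'-1} = \sigma_i^{1/p}$, and equality in von Neumann forces simultaneous SVD alignment with $U, V$ on the nonzero singular values. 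I would only sketch this alternative.

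Finally, for the limit $p\to\infty$, the entries of $\Sigma^{1/p}$ converge to $1$ on the support, so $U\Sigma^{1/p}V^T \to UV^T$. Also $\|\cdot\|_{S_{p+1}} \to \|\cdot\|_{S_\infty}$, and $UV^T$ attains the dual bound $\langle G, -UV^T\rangle_F = -\|G\|_*$ with $\|UV^T\|_2 = 1$. This confirms it is an LMO solution for the spectral ball. Uniqueness fails there, since the polar factor is not the only minimizer, so I will state this case only as a limit, as the proposition does.
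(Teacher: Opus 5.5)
Your proof is correct and follows the paper's route: von Neumann's trace inequality reduces $(\mathcal{P})$ to a problem on the singular values, and H\"older's inequality with exponents $(p+1)/p$ and $p+1$ puts the optimum at $d_i \propto \sigma_i^{1/p}$, which is Steps 1--3 of the paper's proof. The differences are organizational: you check that an explicit candidate attains the dual bound $-\|G\|_{S_{(p+1)/p}}$, and you get uniqueness from strict convexity of the $S_{p+1}$ ball (valid since $p+1 \ge 2$), which avoids the delicate equality case of von Neumann's inequality that the paper relies on; note only that your aside about uniqueness failing at $p=\infty$ holds for rank-deficient $G$ but not for full-rank $G$, though this does not affect the argument.
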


\begin{proof}

\textbf{Step 1: Decoupling via unitary invariance.}

By von Neumann's trace inequality, for any $A, B \in \mathbb{R}^{m \times n}$:
\begin{equation}
    \langle A, B \rangle_F \geq -\sum_{i=1}^r \sigma_i(A)\, \sigma_i(B),
\end{equation}
with equality if and only if $A$ and $B$ share the same left and right singular vectors, with $B = -U_A \operatorname{diag}(\sigma_i(B)) V_A^T$. Applying this to $A = G$ and $B = \delta W$, the objective $\langle G, \delta W \rangle_F$ is minimised, for any fixed singular values $d_i = \sigma_i(\delta W)$, by taking $\delta W = -UDV^T$ with $D = \operatorname{diag}(d_i)$, yielding:
\begin{equation}
    \langle G, \delta W \rangle_F = -\sum_{i=1}^r \sigma_i\, d_i.
\end{equation}
Since $\|\delta W\|_{S_{p+1}}^{p+1} = \sum_i d_i^{p+1}$ is invariant to the choice of singular vectors, the problem $(\mathcal{P})$ reduces without loss of generality to:
\begin{equation}
    \max_{d_i \geq 0} \; \sum_{i=1}^r \sigma_i\, d_i \quad \text{s.t.} \quad \sum_{i=1}^r d_i^{p+1} \leq 1. \tag{$\mathcal{P}'$}
\end{equation}
The constraint is active at any optimum since all $\sigma_i > 0$, so we work with $\sum_i d_i^{p+1} = 1$.

\textbf{Step 2: Solving the scalar problem via H\"{o}lder's inequality.}

Apply H\"{o}lder's inequality with conjugate exponents $a = \frac{p+1}{p}$ and $b = p+1$, satisfying $\frac{1}{a} + \frac{1}{b} = \frac{p}{p+1} + \frac{1}{p+1} = 1$, to the sequences $x_i = \sigma_i$ and $y_i = d_i$:
\begin{equation}
    \sum_{i=1}^r \sigma_i\, d_i \leq \left(\sum_{i=1}^r \sigma_i^{(p+1)/p}\right)^{p/(p+1)} \left(\sum_{i=1}^r d_i^{p+1}\right)^{1/(p+1)}.
\end{equation}
Since $\sum_i d_i^{p+1} = 1$, this simplifies to:
\begin{equation}
    \sum_{i=1}^r \sigma_i\, d_i \leq \left(\sum_{i=1}^r \sigma_i^{(p+1)/p}\right)^{p/(p+1)}.
\end{equation}
Equality in H\"{o}lder's inequality holds if and only if $x_i^a \propto y_i^b$, i.e. $\sigma_i^{(p+1)/p} \propto d_i^{p+1}$, which gives:
\begin{equation}
    d_i \propto \sigma_i^{1/p}.
\end{equation}
Imposing the normalisation $\sum_i d_i^{p+1} = 1$ with $d_i = c\,\sigma_i^{1/p}$ and solving for $c$:
\begin{equation}
    c^{p+1} \sum_{i=1}^r \sigma_i^{(p+1)/p} = 1 \implies c = \left(\sum_{i=1}^r \sigma_i^{(p+1)/p}\right)^{-1/(p+1)},
\end{equation}
yielding the unique maximiser of $(\mathcal{P}')$:
\begin{equation}
    d_i^\star = \frac{\sigma_i^{1/p}}{\left(\displaystyle\sum_{j=1}^r \sigma_j^{(p+1)/p}\right)^{1/(p+1)}}.
\end{equation}

\textbf{Step 3: Reconstruction and absorption into the learning rate.}

The solution to $(\mathcal{P})$ is:
\begin{equation}
    \delta W^\star = -U D^\star V^T = -\frac{U \Sigma^{1/p} V^T}{\left(\sum_j \sigma_j^{(p+1)/p}\right)^{1/(p+1)}},
\end{equation}
where the denominator is a strictly positive scalar depending only on $G$ and $p$. Absorbing it into the learning rate $\eta > 0$, the descent direction is:
\begin{equation}
    \delta W^\star \;\propto\; -U\Sigma^{1/p}V^T.
\end{equation}

\textbf{Step 4: Special cases and the limit $p \to \infty$.}

\begin{itemize}
    \item $p = 1$: $d_i^\star \propto \sigma_i$, so $\delta W^\star \propto U\Sigma V^T = G$, recovering the \textbf{SGD} update direction.
    \item $p \to \infty$: Since $\lim_{p \to \infty} \|\cdot\|_{S_{p+1}} = \|\cdot\|_{S_\infty}$ \citep{delattre2023efficient}, the constraint in $(\mathcal{P})$ converges to the spectral norm constraint $\|\delta W\|_{S_\infty} \leq 1$. Correspondingly, $\sigma_i^{1/p} \to 1$ for all $i$, so $\delta W^\star \propto UV^T = \operatorname{PolarFactor}(G)$, recovering the \textbf{Muon} update direction.
\end{itemize}

\end{proof}

\subsection{Closed Form Optimal p Value From the Random Feature Regression Model}
\label{proof:optimal-p-computation}

We prove the closed-form expression for the optimal Schatten exponent $p^\star$ given in Proposition~\ref{prop:p_star_moment_tight} of the main text, restricted to the unpreconditioned case $D = \mathbf{1}$. The general case with $D \neq \mathbf{1}$ is treated in Appendix~\ref{app:second_order_p}. We additionally show that the resulting one-dimensional optimization problem in $p$ is well-suited to bounded scalar line search.

\subsubsection{Setting and Notation}

We work in the random feature regression setting of Section~\ref{sec:rfr}, with loss
\begin{equation}
    \mathcal{L}(W) = \frac{1}{2k}\|W A - Y\|_F^2,
\label{eq:proof-rfr-loss}
\end{equation}
where $W \in \mathbb{R}^{m \times n}$ is a layer's weight matrix, $A \in \mathbb{R}^{n \times k}$ is the post-activation of the previous layer at step $t$, and $Y \in \mathbb{R}^{m \times k}$ is the target. The gradient is $G = \nabla \mathcal{L}(W) = \tfrac{1}{k}(W A - Y) A^T$, and the momentum buffer follows the standard EMA recursion $M \leftarrow \beta M + (1 - \beta) G$.

Let $M = U_M \Sigma_M V_M^T$ be the (thin) singular value decomposition of $M$, with $\Sigma_M = \mathrm{diag}(\sigma_{M,1}, \ldots, \sigma_{M,k})$ and $k = \min(m, n_\text{out})$. The Schatten-$p$ momentum descent update considered in this section is
\begin{equation}
    \delta W = -\eta\, U_M \Sigma_M^{1/p} V_M^T, \qquad p \in [1, \infty],
\label{eq:proof-update}
\end{equation}
which recovers SGD with momentum at $p = 1$ and Muon as $p \to \infty$.

\subsubsection{Exact Loss Decomposition}

The squared loss~\eqref{eq:proof-rfr-loss} is a convex quadratic in $W$, and the loss change from a perturbation $\delta W$ admits the exact decomposition
\begin{equation}
    \mathcal{L}(W + \delta W) - \mathcal{L}(W) = \langle G, \delta W\rangle_F + \frac{1}{2k}\|\delta W \cdot A\|_F^2.
\label{eq:proof-loss-decomp}
\end{equation}
This is an equality, not an approximation: the cross-term $\langle G, \delta W\rangle_F$ captures the first-order descent and the quadratic term $\tfrac{1}{2k}\|\delta W A\|_F^2$ captures the curvature of $\mathcal{L}$ along the update direction, with no higher-order remainder.

\subsubsection{Simplification of the First-Order Term}

Substituting~\eqref{eq:proof-update} into the first-order term and expanding the inner product in the singular basis,
\begin{align}
    \langle G, \delta W\rangle_F
    &= -\eta \,\langle G, U_M \Sigma_M^{1/p} V_M^T\rangle_F \nonumber\\
    &= -\eta \,\mathrm{Tr}\bigl(\Sigma_M^{1/p}\, V_M^T G^T U_M\bigr) \nonumber\\
    &= -\eta \,\mathrm{Tr}\bigl(\Sigma_M^{1/p}\, C^T\bigr) \nonumber\\
    &= -\eta \sum_{i=1}^k \sigma_{M,i}^{1/p}\, C_{i,i},
\label{eq:proof-first-order}
\end{align}
where we have introduced the \emph{alignment matrix}
\begin{equation}
    C := U_M^T G V_M \in \mathbb{R}^{k \times k},
\label{eq:proof-C-def}
\end{equation}
and $C_{i,i}$ are its diagonal entries. The off-diagonal entries of $C$ do not appear in~\eqref{eq:proof-first-order} because the trace selects only the diagonal of the product $\Sigma_M^{1/p} C^T$.

\paragraph{Interpretation.} $C_{i,i}$ measures the alignment between the current gradient $G$ and the $i$-th singular component of the momentum-accumulated gradient $M$. When $G$ is well aligned with $M$ (e.g., during stable phases), the leading $C_{i,i}$ are large and positive; when $G$ deviates from the momentum direction, the $C_{i,i}$ shrink or change sign. The sign of $C_{i,i}$ may flip across the spectrum, reflecting partial misalignment between gradient and momentum.

\subsubsection{Simplification of the Curvature Term}

Substituting~\eqref{eq:proof-update} into the curvature term and using the orthonormality of $U_M$,
\begin{align}
    \|\delta W \cdot A\|_F^2
    &= \eta^2\, \|U_M \Sigma_M^{1/p} V_M^T A\|_F^2 \nonumber\\
    &= \eta^2\, \mathrm{Tr}\bigl(A^T V_M \Sigma_M^{2/p} V_M^T A\bigr) \nonumber\\
    &= \eta^2\, \mathrm{Tr}\bigl(\Sigma_M^{2/p}\, V_M^T A A^T V_M\bigr) \nonumber\\
    &= \eta^2 \sum_{i=1}^k \sigma_{M,i}^{2/p}\, B_{i,i},
\label{eq:proof-curvature}
\end{align}
where
\begin{equation}
    B_{i,i} := (V_M^T A A^T V_M)_{i,i} = \|A^T V_M e_i\|^2.
\label{eq:proof-B-def}
\end{equation}
Each $B_{i,i} \geq 0$ measures the energy of the $i$-th right-singular direction of $M$ when projected onto the activation $A$. The orthonormality of $U_M$ is what eliminates cross-singular-value couplings: $\|U_M X\|_F^2 = \|X\|_F^2$ for any $X$, so the $U_M$ factors cancel cleanly. This is a special property of $D = \mathbf{1}$ and is the structural reason why the closed-form $p^\star$ exists in this case.

\subsubsection{Closed-Form Optimal Step Size}

Combining~\eqref{eq:proof-loss-decomp},~\eqref{eq:proof-first-order}, and~\eqref{eq:proof-curvature}, the loss decrease is
\begin{equation}
    \mathcal{L}(W) - \mathcal{L}(W + \delta W) = \eta\, N(p) - \frac{\eta^2}{2k}\, D(p),
\label{eq:proof-quadratic-eta}
\end{equation}
where we have defined
\begin{equation}
    N(p) := \sum_{i=1}^k \sigma_{M,i}^{1/p}\, C_{i,i}, \qquad D(p) := \sum_{i=1}^k \sigma_{M,i}^{2/p}\, B_{i,i}.
\label{eq:proof-N-D}
\end{equation}
The right-hand side of~\eqref{eq:proof-quadratic-eta} is concave quadratic in $\eta$, with maximum at the co-optimal step size
\begin{equation}
    \eta^\star(p) = \frac{k\, N(p)}{D(p)},
\label{eq:proof-eta-star}
\end{equation}
attained whenever $N(p) > 0$ (otherwise the maximum is at $\eta = 0$, i.e., the update direction is not a descent direction and the quadratic model recommends not moving). Substituting $\eta^\star(p)$ back into~\eqref{eq:proof-quadratic-eta} yields the maximum loss decrease at exponent $p$:
\begin{equation}
    \bigl[\mathcal{L}(W) - \mathcal{L}(W + \delta W)\bigr]_{\eta = \eta^\star(p)} = \frac{k}{2}\, \frac{N(p)^2}{D(p)}.
\label{eq:proof-max-decrease}
\end{equation}

\paragraph{Remark} In practice, all $\eta^*(p)$ values we computed on deep learning tasks turned out to be positive \textit{across all vision experiments where they where monitored}. However, in the case where $\eta^*(p) \leq 0$, a fallback heuristic could be to leave the $p^*$ value unchanged for that particular step.

\subsubsection{Closed-Form Optimal p}

The factor $k/2$ in~\eqref{eq:proof-max-decrease} is independent of $p$, so the value of $p$ that maximizes the loss decrease under co-optimal step size is
\begin{equation}
    \boxed{\;p^\star = \operatorname{argmax}_{p \in [1, \infty]}\; \frac{N(p)^2}{D(p)} = \operatorname{argmax}_{p \in [1, \infty]}\; \frac{\bigl(\sum_i \sigma_{M,i}^{1/p}\, C_{i,i}\bigr)^2}{\sum_i \sigma_{M,i}^{2/p}\, B_{i,i}}\;}
\label{eq:proof-p-star}
\end{equation}
which is the expression stated in Proposition~\ref{prop:p_star_moment_tight}. Note that the optimal step size $\eta^\star(p)$ is itself $p$-dependent, but its value does not enter the criterion for $p^\star$: the $\eta$-degree-of-freedom has been integrated out by the closed-form maximization in~\eqref{eq:proof-eta-star}, leaving $p^\star$ as a function of the spectrum and the alignment/curvature coefficients only. The criterion can equivalently be read as a \emph{correlation-to-curvature ratio}: $N(p)$ measures the $p$-weighted correlation between gradient and momentum-singular structure, $D(p)$ measures the corresponding curvature, and $p^\star$ chooses the spectral geometry that maximizes the squared ratio.\hfill$\square$

\subsubsection{Suitability of Bounded Scalar Line Search}
\label{app:line-search}

Equation~\eqref{eq:proof-p-star} is a one-dimensional optimization over $p \in [1, \infty]$. We argue that it is well-suited to a bounded scalar line search algorithm such as Brent's method~\citep{brent2013algorithms}, with three structural properties supporting this claim.

\paragraph{Bounded effective domain.} Although the criterion is defined formally on $[1, \infty]$, the dependence of $N(p)$ and $D(p)$ on $p$ is through $\sigma_{M,i}^{1/p}$ and $\sigma_{M,i}^{2/p}$. As $p \to \infty$, both $\sigma_{M,i}^{1/p} \to 1$ and $\sigma_{M,i}^{2/p} \to 1$, so the criterion saturates to a finite limit, assuming $\sigma_{M,i} > 0$:
\begin{equation}
    \lim_{p \to \infty}\; \frac{N(p)^2}{D(p)} = \frac{\bigl(\sum_i C_{i,i}\bigr)^2}{\sum_i B_{i,i}}.
\label{eq:proof-p-infty-limit}
\end{equation}
The criterion becomes essentially flat once $p$ is large enough that all $\sigma_{M,i}^{1/p}$ are close to unity. We therefore restrict the search to a bounded interval $[p_\text{min}, p_\text{max}]$ chosen above this saturation regime; in our experiments $p_\text{min} = 1.02$ (slightly above 1 to avoid the degenerate $\sigma_{M,i}^1$ summation) and $p_\text{max} = 50$ are sufficient.

\paragraph{Smoothness.} Both $N(p)$ and $D(p)$ are infinitely differentiable functions of $p$ on $(1, \infty)$, since each summand $\sigma_{M,i}^{q/p}$ is smooth in $p$ for $\sigma_{M,i} > 0$ and $q \in \{1, 2\}$. The criterion $J(p) := N(p)^2 / D(p)$ is therefore $C^\infty$ wherever $D(p) > 0$, which holds whenever there exists at least one index $i$ with both $\sigma_{M,i} > 0$ and $B_{i,i} > 0$ --- a generic condition for non-degenerate momentum and activation matrices.

\paragraph{Mild non-convexity, well-behaved in practice.} $J(p)$ is not unimodal in full generality: the squared correlation $N(p)^2$ can have local minima where $C_{i,i}$ entries cancel, producing transient zeros of $N(p)$ separating regions of opposite sign. In stable training phases, however, the dominant $C_{i,i}$ entries are positive and consistently signed the gradient and momentum agree on their leading components and we observe smooth, single-maximum landscapes throughout these phases. The rare cases in which $J(p)$ becomes non-unimodal coincide with phase transitions where the criterion's underlying assumptions are themselves momentarily strained; the temporal EMA on the alignment and curvature coefficients (Section~\ref{sec:estimator}) absorbs these transients without affecting the surrounding $p^\star$ trajectory. Brent's method, which combines bracketing with golden-section search and parabolic interpolation, converges rapidly on the smooth landscapes that dominate training and degrades gracefully, returning a local maximum within the bracket, on the rare non-unimodal steps. We have not observed pathological behavior of the line search in any of our experiments.

\paragraph{Cost per evaluation.} Once the SVD of $M$ and the alignment/curvature coefficients $\{\sigma_{M,i}, C_{i,i}, B_{i,i}\}_{i=1}^k$ are precomputed, each evaluation of $J(p)$ costs $O(k)$ floating-point operations: two $k$-fold sums and a division. The full line search therefore costs $O(k \cdot n_\text{eval})$ on top of the one-time SVD cost, which is negligible compared to the SVD itself. This is the regime in which the bounded scalar minimization is essentially free, and is one of the reasons why the unpreconditioned criterion of~\eqref{eq:proof-p-star} is computationally attractive --- a property that is partially lost when general second-order moments are introduced (see Appendix~\ref{app:second_order_p}).

\section{Considering the Scale of Updates}
\label{app:scaling-factor-update}

Throughout this study, we choose the optimal update geometry according to a random feature regression surrogate that assumes optimal step sizes. Importantly, this step size can be computed in practice. However, we found this factor to be variable in the $[10^{-2}, 10^4]$ range across model layers. Importantly, we found that using optimal step size multipliers according to the random feature regression model could degrade training as some $\eta^*_t$ values could become higher than $\eta^*_{t-1}$ after a $p^*_{t-1} \rightarrow p^*_t$ update, thus creating loss spikes. 
Furthermore, scaling each update with its optimal step size according to its LMO formulation requires the computation of the Schatten-$(p+1)$ norm of the update, which is costly in practice.
Finally, as previously shown in~\citet{qi2026delving} for fixed $p$ variants, second order preconditioning seems to importantly mitigate this phenomenon, thus explaining the poor performance of SMuon with no second order preconditioning in some scenarios.

\section{Proof of Taylor Approximation of $U \Sigma^{1/p} V^T$ in Prop.~\ref{prop:matrix_approx}}
\label{app:taylor_approximate}

\begin{proposition}[Fractional Update Estimate]
    For any matrix $G \in \mathbb{R}^{m \times n}$ of singular value decomposition $U \Sigma V^T$ and polar factor $P = UV^T$, we can write:

    \begin{equation}
        U \Sigma^{1/p} V^T = \alpha^{1/p}  \sum_{i=0}^\infty \binom{1/p}{i} \left(\frac{1}{\alpha} G P^T - I\right)^i P
    \end{equation}

    for all finite $\alpha \geq \| G \|_2$. 
    % with $M = \sum_{i=0}^\infty \binom{1/p}{i} (\frac{1}{\alpha} X (UV^T)^T - I)^i$
\end{proposition}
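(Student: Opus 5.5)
The plan is to reduce the matrix series to a scalar binomial series acting on each singular value separately, by diagonalizing everything in the singular basis of $G$. I work with the thin SVD $G = U\Sigma V^T$, where $U \in \mathbb{R}^{m\times r}$ and $V \in \mathbb{R}^{n\times r}$ have orthonormal columns and $\Sigma = \operatorname{diag}(\sigma_1,\dots,\sigma_r)$ with all $\sigma_i>0$. Then $P = UV^T$ and $V^TV = I_r$.

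\textbf{Step 1 (the central matrix).} Using $V^TV=I_r$,
\begin{equation}
GP^T = U\Sigma V^T V U^T = U\Sigma U^T .
\end{equation}
Setting $\Pi := UU^T$, I write
\begin{equation}
X := \tfrac{1}{\alpha}GP^T - I = U\left(\tfrac{1}{\alpha}\Sigma - I_r\right)U^T - (I-\Pi).
\end{equation}
The two summands live on the orthogonal subspaces $\operatorname{range}(U)$ and its complement, and their cross products vanish because $(I-\Pi)U = 0$. Hence, for every $i\ge 0$,
\begin{equation}
X^i = U\left(\tfrac{1}{\alpha}\Sigma - I_r\right)^i U^T + (-1)^i (I-\Pi).
\end{equation}

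\textbf{Step 2 (the complement is killed).} Multiplying on the right by $P$, and using $(I-\Pi)P = 0$ and $U^TP = V^T$, gives
\begin{equation}
X^i P = U\left(\tfrac{1}{\alpha}\Sigma - I_r\right)^i V^T .
\end{equation}
So every term of the series is diagonal in the singular basis of $G$. Each partial sum equals $U\,\operatorname{diag}\bigl(\sum_{i\le N}\binom{1/p}{i}(s_j-1)^i\bigr)V^T$, where $s_j := \sigma_j/\alpha$.

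\textbf{Step 3 (scalar convergence).} Since $\alpha \ge \|G\|_2 = \sigma_1$ and every $\sigma_j>0$, each $s_j$ lies in $(0,1]$, so $x_j := s_j-1 \in (-1,0]$. The binomial series $\sum_i \binom{1/p}{i}x^i$ converges to $(1+x)^{1/p}$ for $|x|<1$. Therefore $\sum_i \binom{1/p}{i}(s_j-1)^i = s_j^{1/p}$ for each $j$.

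Because there are only finitely many $j$, the partial sums converge entrywise, and hence in any matrix norm, to $U(\Sigma/\alpha)^{1/p}V^T$. Multiplying by $\alpha^{1/p}$ yields $U\Sigma^{1/p}V^T$, which is the claim.

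\textbf{Main obstacle.} The delicate step is Step 2 when $G$ is rank-deficient or non-square. There, $X$ has eigenvalue $-1$ on $\operatorname{range}(U)^\perp$, so the matrix series $\sum_i\binom{1/p}{i}X^i$ by itself sits on the boundary of its disc of convergence. I would avoid analyzing it at all by always applying the factor $P$ first, term by term, so that only the scalar series at $x_j \in (-1,0]$ is ever needed.

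As a remark, the boundary series at $x=-1$ also converges, to $0$, for $1/p>0$, since the coefficients $\binom{1/p}{i}$ are eventually of one sign and summable. This could be mentioned to justify reading the sum as $\bigl(\sum_i\binom{1/p}{i}X^i\bigr)P$, but it is not required. The edge case $\alpha=\sigma_1$ is harmless because it gives $x_1=0$.
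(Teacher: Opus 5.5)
Your proof is correct and takes essentially the same approach as the paper. The paper also uses $GP^T = U\Sigma U^T$, writes $U\Sigma^{1/p}V^T = \alpha^{1/p}Z^{1/p}P$ with $Z = \frac{1}{\alpha}GP^T$, and expands $Z^{1/p} = (I + (Z-I))^{1/p}$ as a binomial series, noting only that the eigenvalues of $Z$ lie in $[0,1]$; your version is more careful, because $Z - I$ has eigenvalue $-1$ on $\operatorname{range}(U)^\perp$ whenever $m > \operatorname{rank}(G)$, and you handle this boundary case explicitly, either by applying $P$ term by term to annihilate that subspace or via your remark that the binomial series with positive exponent converges absolutely at $x=-1$.
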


\begin{proof}
Let $G = U\Sigma V^T$ be the singular value decomposition of $G$. The Schatten-$(p+1)$ fractional map is defined by
$$\mathcal{F}_p(G) = U\Sigma^{1/p}V^T.$$

Let $P = UV^T$ denote the polar factor of $G$, and let $\alpha > 0$ be a normalization factor (usually $\alpha \geq \|G\|_2$ with $\|\cdot\|_2$ the spectral norm). Define
$$Z = \frac{1}{\alpha} U\Sigma U^T .$$
Note that the eigenvalues of $Z$ are between 0 and 1, due to the normalization factor $\alpha$. Using $G = U\Sigma V^T$ and $P^T = VU^T$, we have
$$Z = \frac{1}{\alpha} U\Sigma U^T = \frac{1}{\alpha} G P^T .$$

Now observe that, since $Z$ is a positive semi-definite matrix,
$$
    Z^{1/p} = \left(\frac{1}{\alpha} U\Sigma U^T \right)^{1/p} = \frac{1}{\alpha^{1/p}} U\Sigma^{1/p}U^T .
$$
Therefore,
$$
    \alpha^{1/p} Z^{1/p} P = \alpha^{1/p} \left( \frac{1}{\alpha^{1/p}} U\Sigma^{1/p}U^T \right) UV^T = U\Sigma^{1/p}V^T .
$$
Hence,
$$ \mathcal{F}_p(G) = \alpha^{1/p} Z^{1/p}P .$$

Next, with $I$ the identity matrix and $E$ the complementary matrix to $Z$, write
$$ Z = I + E,\qquad E = Z - I .$$
Then, using the binomial Taylor expansion,
$$ 
    Z^{1/p} = (I+E)^{1/p} = \sum_{i=0}^{\infty} \binom{1/p}{i} E^i .
$$
Since
$$ E = Z-I = \frac{1}{\alpha}GP^T - I, $$
we obtain
$$ Z^{1/p} = \sum_{i=0}^{\infty} \binom{1/p}{i} \left( \frac{1}{\alpha}GP^T - I \right)^i .$$

Finally we obtain the result in Prop.~\ref{prop:matrix_approx},
\[
    \mathcal{F}_p(G)
    =
    \alpha^{1/p}
    \left[
        \sum_{i=0}^{\infty}
        \binom{1/p}{i}
        \left(
            \frac{1}{\alpha}GP^T - I
        \right)^i
    \right]
    P .
\]

In practice, the infinite series is truncated at degree \(K\), yielding the approximation
\[
    \widehat{\mathcal{F}}_p(G)
    =
    \alpha^{1/p}
    \left[
        \sum_{i=0}^{K}
        \binom{1/p}{i}
        \left(
            \frac{1}{\alpha}GP^T - I
        \right)^i
    \right]
    P .
\]

Moreover, computing exactly the polar factor $P$ is prohibitive. The Newton-Schulz approximation is used to efficiently estimate $P$. 
\end{proof}

%% file: appendix/b_second_order_p.tex
\section{Second-Order Moments in the Schatten-p Framework}
\label{app:second_order_p}

The main text develops the optimal Schatten exponent $p^\star$ criterion for $D = \mathbf{1}$ (SGD, Muon, and Schatten-$p$ momentum descent). This appendix shows how an elementwise second-order preconditioner $D$ can be fitted into the framework, characterizes the resulting modification to the $p^\star$ criterion, and discusses the interpolation between Euclidean and spectral families that arises naturally from this construction.

\subsection{Plugging Adam-Style Coordinate Rescaling into the Update Rule}
\label{app:adam-plugin}

Recall the unified update rule from~\citet{veprikov2025preconditionednormsunifiedframework}:
\begin{equation}
    \delta W = -\eta\, D^{\circ -1} \odot \mathrm{LMO}_{(p+1)}\bigl(D^{\circ -1} \odot M\bigr),
\label{eq:appendix-general-update}
\end{equation}
where $D \in \mathbb{R}^{m \times n}_{>0}$ encodes second-order moment information. For Adam-style preconditioning, $D^{\circ -1}$ is constructed from a bias-corrected exponential moving average of squared gradients $\hat{V}_t = (1 - \beta_2^t)^{-1} \sum_{s \leq t} \beta_2^{t-s}(1 - \beta_2)\, G_s \odot G_s$, with the multiplier scaling
\begin{equation}
    D^{\circ -1} = (\hat{V} + \varepsilon)^{\circ -\alpha},
\label{eq:appendix-adam-D}
\end{equation}
where $\alpha = 1/2$ recovers Adam-style RMS rescaling, $\alpha = 1/4$ recovers MuAdam~\citep{veprikov2025preconditionednormsunifiedframework}, and $\alpha = 1/(2(p+1))$ defines a \emph{$p$-dependent} family that smoothly transitions from MuAdam at $p = 1$ to Muon at $p \to \infty$ (see Section~\ref{app:interpolation}).

The recipe is mechanical: pre-multiply $G$ and $M$ elementwise by $D^{\circ -1}$ before invoking the LMO, and post-multiply the LMO output again by $D^{\circ -1}$ before applying the update. This is what \citet{veprikov2025preconditionednormsunifiedframework} call the \emph{symmetric D-norm preconditioned LMO} for the Schatten-$(p+1)$ base norm.

\subsection{Modification of the Optimal p Proxy}
\label{app:p-star-with-D}

We now derive how the $p^\star$ criterion of Proposition~\ref{prop:p_star_moment_tight} changes when $D \neq \mathbf{1}$. Write $\tilde{G} := D^{\circ -1} \odot G$, $\tilde{M} := D^{\circ -1} \odot M$, and let $\tilde{M} = \tilde{U}_M \tilde{\Sigma}_M \tilde{V}_M^T$ be its SVD. The actual physical update is
\begin{equation}
    \delta W = -\eta\, D^{\circ -1} \odot \tilde{Y}_p, \qquad \tilde{Y}_p := \tilde{U}_M \tilde{\Sigma}_M^{1/p} \tilde{V}_M^T.
\label{eq:appendix-physical-update}
\end{equation}

\paragraph{Alignment term.} Using the identity $\langle G, D^{\circ -1} \odot Y\rangle_F = \langle \tilde{G}, Y\rangle_F$, the first-order term is
\begin{equation}
    \langle G, \delta W\rangle_F = -\eta \sum_i \tilde{\sigma}_{M,i}^{1/p}\, \tilde{C}_{i,i}, \qquad \tilde{C}_{i,i} := (\tilde{U}_M^T \tilde{G} \tilde{V}_M)_{i,i}.
\label{eq:appendix-alignment}
\end{equation}
This expression is exact for arbitrary $D$ and is independent of $p$ except through the $\tilde{\sigma}_{M,i}^{1/p}$ factor.

\paragraph{Curvature term.} The curvature is
\begin{equation}
    \frac{1}{2n}\|\delta W \cdot A\|_F^2 = \frac{\eta^2}{2n}\bigl\|(D^{\circ -1} \odot \tilde{Y}_p) A\bigr\|_F^2.
\label{eq:appendix-curvature-exact}
\end{equation}
Crucially, the Hadamard product does not commute with matrix multiplication: $(D^{\circ -1} \odot \tilde{Y}_p) A \neq D^{\circ -1} \odot (\tilde{Y}_p A)$. Expanding in the singular basis,
\begin{equation}
    \bigl\|(D^{\circ -1} \odot \tilde{Y}_p) A\bigr\|_F^2 = \sum_{r,s} \tilde{\sigma}_{M,r}^{1/p}\, \tilde{\sigma}_{M,s}^{1/p}\, K_{rs}(D),
\label{eq:appendix-curvature-cross}
\end{equation}
where the coupling tensor is
\begin{equation}
    K_{rs}(D) := \sum_{i, j, k} (D^{\circ -1})_{ij} (D^{\circ -1})_{ik}\, \tilde{U}_{M,ir} \tilde{U}_{M,is}\, \tilde{V}_{M,jr} \tilde{V}_{M,ks} (A A^T)_{jk}.
\label{eq:appendix-K-tensor}
\end{equation}
The diagonal entries $K_{rr}$ recover the $D = \mathbf{1}$ result $\tilde{B}_{i,i} := \|A^T \tilde{V}_M e_i\|^2$ when $D \propto \mathbf{1}$. The off-diagonal entries $K_{rs}$ for $r \neq s$ vanish identically in that case (by orthonormality of $\tilde{U}_M$) but are nonzero in general --- they are the \emph{cross-singular-value couplings} between distinct components of $\tilde{Y}_p$ induced by the elementwise preconditioner.

\paragraph{Computational consequence.} For $D = \mathbf{1}$, the curvature is a sum $\sum_i \tilde{\sigma}_{M,i}^{2/p}\, \tilde{B}_{i,i}$, allowing the bounded scalar minimization over $p$ to be evaluated from precomputed quantities $\{\tilde{\sigma}_{M,i}, \tilde{C}_{i,i}, \tilde{B}_{i,i}\}_i$ at $O(k)$ cost per call. For $D \neq \mathbf{1}$, the curvature~\eqref{eq:appendix-curvature-exact} requires reforming $\tilde{Y}_p$ for each candidate $p$ --- specifically:
\begin{equation}
    D_\text{val}(p) = \bigl\|(D^{\circ -1} \odot \tilde{U}_M \tilde{\Sigma}_M^{1/p} \tilde{V}_M^T) A\bigr\|_F^2,
\label{eq:appendix-Dval-recompute}
\end{equation}
which costs one $m \times n$ matrix product to form $\tilde{Y}_p$, one Hadamard product, one $m \times s$ matrix product with $A$, and one Frobenius norm. Compared to the $D = \mathbf{1}$ closed form, this is an order-of-magnitude increase in per-call cost during the bounded scalar search, but remains tractable: the bounded minimizer typically issues $\sim 20$ objective evaluations, costing roughly the same as a single forward pass on the layer being optimized.

\begin{proposition}[Layerwise optimal Schatten exponent under preconditioning]
\label{prop:p_star_with_D}
Let $\tilde{M} = D^{\circ -1} \odot M$ have SVD $\tilde{U}_M \tilde{\Sigma}_M \tilde{V}_M^T$, and let $\tilde{C}_{i,i}$ be defined as in~\eqref{eq:appendix-alignment}. The value of $p$ that maximizes the exact one-step descent guarantee on the random feature regression loss under co-optimal step size and the update rule~\eqref{eq:appendix-physical-update} is
\begin{equation}
    p^\star(D) = \operatorname{argmax}_{p \in [1, \infty]}\; \frac{\bigl(\sum_i \tilde{\sigma}_{M,i}^{1/p}\, \tilde{C}_{i,i}\bigr)^2}{\bigl\|(D^{\circ -1} \odot \tilde{U}_M \tilde{\Sigma}_M^{1/p} \tilde{V}_M^T) A\bigr\|_F^2}.
\label{eq:appendix-p-star-with-D}
\end{equation}
For $D \propto \mathbf{1}$, this reduces to Eq.~\eqref{eq:optimal_p_first_order} of the main text, with the curvature decomposing into a sum over singular values. For general $D$, the curvature is irreducibly a quadratic form in $\tilde{\sigma}_{M,r}^{1/p}\tilde{\sigma}_{M,s}^{1/p}$, and the bounded scalar search requires reforming $\tilde{Y}_p$ at each candidate $p$.
\end{proposition}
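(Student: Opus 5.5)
The proof follows the template of Proposition~\ref{prop:p_star_moment_tight} in Appendix~\ref{proof:optimal-p-computation}. The only change is that the curvature term is no longer diagonalised. Start from the exact decomposition~\eqref{eq:proof-loss-decomp}: the random feature regression loss is quadratic in $W$, so
\[
\mathcal{L}(W+\delta W)-\mathcal{L}(W)=\langle G,\delta W\rangle_F+\tfrac{1}{2k}\|\delta W A\|_F^2 .
\]
This holds for any perturbation, and in particular for the physical update $\delta W=-\eta\,D^{\circ-1}\odot\tilde Y_p$ of~\eqref{eq:appendix-physical-update}.

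\textbf{First-order term.} Use the Hadamard adjoint identity $\langle G, D^{\circ-1}\odot Y\rangle_F=\langle D^{\circ-1}\odot G, Y\rangle_F$, which holds because the Frobenius inner product is an entrywise sum. This moves the preconditioner onto $G$ and gives $\langle \tilde G,\tilde Y_p\rangle_F$. Then repeat the trace-cyclic computation of~\eqref{eq:proof-first-order} with $\tilde U_M,\tilde V_M$ in place of $U_M,V_M$. This yields $-\eta\,\tilde N(p)$ with $\tilde N(p)=\sum_i\tilde\sigma_{M,i}^{1/p}\tilde C_{i,i}$, as in~\eqref{eq:appendix-alignment}.

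\textbf{Curvature term.} Simply define $\tilde D(p):=\|(D^{\circ-1}\odot\tilde Y_p)A\|_F^2$. Its scaling is clean: because the map $Y\mapsto (D^{\circ-1}\odot Y)A$ is linear, the term equals $\eta^2\tilde D(p)/(2k)$.

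\textbf{Optimising the step size.} The loss decrease is $\eta\tilde N(p)-\frac{\eta^2}{2k}\tilde D(p)$, a concave quadratic in $\eta$ whenever $\tilde D(p)>0$. Maximising it as in~\eqref{eq:proof-eta-star} gives $\eta^\star=k\tilde N/\tilde D$ and optimal decrease $\frac{k}{2}\tilde N(p)^2/\tilde D(p)$. The factor $k/2$ does not depend on $p$, so taking the argmax gives~\eqref{eq:appendix-p-star-with-D}. The normalisation mismatch ($1/(2n)$ in~\eqref{eq:appendix-curvature-exact} versus $1/(2k)$) is only a constant and does not affect the argmax.

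I would handle the same caveats as in the unpreconditioned proof. If $\tilde N(p)\le 0$, the optimum is $\eta=0$; following the earlier remark, we restrict to $p$ with $\tilde N(p)>0$, and since the objective depends on $\tilde N(p)^2$, that is the intended reading. We also need $\tilde D(p)>0$. This holds generically: $D^{\circ-1}$ has positive entries, so $\tilde Y_p\neq0$ forces $D^{\circ-1}\odot\tilde Y_p\neq0$, and we additionally need its row space not to be orthogonal to the range of $A$.

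\textbf{Reduction for $D\propto\mathbf 1$.} If $D=c\mathbf 1$, then $\tilde M=c^{-1}M$. The singular vectors are unchanged, each $\tilde\sigma_{M,i}$ is $c^{-1}\sigma_{M,i}$, and $\tilde C_{i,i}=c^{-1}C_{i,i}$. The curvature becomes $c^{-2}\|\tilde Y_pA\|_F^2$, which by~\eqref{eq:proof-curvature} equals $c^{-2}\sum_i\tilde\sigma_{M,i}^{2/p}B_{i,i}$. In the ratio, the powers $c^{-1/p}$ and $c^{-2/p}$ cancel between numerator and denominator, and so do the remaining constant factors. This recovers~\eqref{eq:optimal_p_first_order}.

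\textbf{General $D$.} Expand $(D^{\circ-1}\odot\tilde Y_p)_{ij}=\sum_r D^{\circ-1}_{ij}\tilde\sigma_{M,r}^{1/p}\tilde U_{M,ir}\tilde V_{M,jr}$, multiply by $A$, and take the squared Frobenius norm. This gives the bilinear form~\eqref{eq:appendix-curvature-cross} with coupling tensor $K_{rs}(D)$. Orthonormality of $\tilde U_M$ no longer kills the $r\neq s$ terms, because the row weights $D^{\circ-1}_{ij}D^{\circ-1}_{ik}$ sit inside the sum over $i$. This justifies the claim that each candidate $p$ requires re-forming $\tilde Y_p$.

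The main obstacle is the statement's claim that the curvature is \emph{irreducibly} a cross-coupled quadratic form. As a ``cannot simplify'' claim it is informal. I would support it with an explicit small example, such as a $2\times2$ case with non-constant $D$, in which $K_{12}\neq0$.
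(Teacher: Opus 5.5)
Your proposal is correct and follows essentially the same route as the paper: the Hadamard adjoint identity $\langle G, D^{\circ -1}\odot Y\rangle_F = \langle \tilde G, Y\rangle_F$ for the alignment term, the expansion of the curvature into the coupling tensor $K_{rs}(D)$, and the co-optimal step-size argument from the unpreconditioned proof. Your explicit check that the factors of $c$ cancel when $D = c\mathbf{1}$, and your note that the $1/(2n)$ versus $1/(2k)$ normalisation is an irrelevant constant, make precise points the paper leaves implicit.
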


In practice however, we consider that the optimality of the $p^*$ proxy value is sufficient by only taking first order moments into account, and we leave the choice of $p$-dependent second order moment as a consolidating heuristic for the optimization. 
Importantly, this heuristic simplification is not invalid, it simply assumes that the first-order moment random feature regression setting is \textit{sufficient} to model the behavior of the adaptive LMO in \textit{preconditioned} settings.

\subsection{Interpolation Between Euclidean and Spectral Families}
\label{app:interpolation}

A pleasant consequence of the unified update rule~\eqref{eq:appendix-general-update} is that varying $p$ and the choice of $D$ jointly traces out a 2D family of optimizers, recovering established methods at its corners. Table~\ref{tab:interpolation} summarizes this structure.

\begin{table}[h]
\centering
\renewcommand{\arraystretch}{1.4}
\setlength{\tabcolsep}{8pt}
\begin{tabular}{lccc}
Trule
\textbf{Moment configuration} & \textbf{$p = 1$ update} & \textbf{$p \to \infty$ update} \\
\midrule
$D = \mathbf{1}$ (no moment) & SGD & Muon \\
$D^{\circ -1} = \hat{V}^{\circ -1/4}$ (fixed) & Adam & MuAdam \\
$D^{\circ -1} = \hat{V}^{\circ -1/(2(p+1))}$ ($p$-dep.) & Adam & Muon \\
\bottomrule
\vspace{5pt}
\end{tabular}
\caption{Optimizer family recovered by the unified update rule~\eqref{eq:appendix-general-update} for various choices of moment $D$ and Schatten exponent $p$. Fixed-$\alpha$ moments interpolate the \emph{magnitude} of the update toward Adam-style at $p = 1$ but preserve the Muon-like spectral structure at $p \to \infty$ \emph{with persistent variance rescaling}. $p$-dependent moments additionally have $D \to \mathbf{1}$ as $p \to \infty$, recovering pure Muon at the spectral endpoint and pure MuAdam at the Euclidean endpoint. 
% Both fixed and $p$-dependent recipes are compatible with structured second-moment estimators including AdaFactor's rank-one $\hat{V} \approx r c^T$ and SANIA-style Polyak rescalings.
}
\label{tab:interpolation}
\end{table}

The $p$-dependent family is particularly attractive: a single optimizer traces out the entire interpolation between MuAdam and Muon as $p$ varies. The adaptive $p^\star$ selection of Section~\ref{sec:rfr} then dynamically chooses the right point in this interpolation per layer per training phase, without manual specification.
Additionally, this framework can be combined more efficiently with different structures of $D$ scaling, where, if $D$ is a diagonal matrix multiplication, the efficient computation of $p^*$ can be reinstated from trace operator properties.

In practice, to reconcile the different update scales of Muon and Adam for example, in the SMuon (Adam) optimizer: we interpolate the learning rate logarithmically between that of Muon using the standard default implementation multiplicative factor (when $p^*= p_{\max}$), and that of the auxiliary AdamW optimizer that is used on non-matrix parameters (when $p^*=1$), ensuring that the empirical RMS norm of updates remains rather stable across $p^*$ values. This eliminates the need for an additional hyperparameter as most practical training scenarios using Muon use auxiliary Adam updates on parameters such as biases. Moreover, while analytical scaling factors could be found for the scale of the updates, they would require costly Schatten-$p^*$ norm estimations at each step.

\subsection{Recovering the Right Exponent}

We show that $\alpha = 1/(2(p+1))$ is the \emph{unique} exponent such that the symmetric preconditioned update
\begin{equation}
    \delta W = -\eta\, D^{\circ -\alpha} \odot \mathrm{LMO}_{(p+1)}\!\bigl(D^{\circ -\alpha} \odot M\bigr),
    \label{eq:symm_update}
\end{equation}
simultaneously (i) recovers the Adam update at $p=1$ and (ii) is invariant to global rescaling of the loss.

\begin{proof}
\textbf{Uniqueness via loss-rescaling invariance.}
Under a global rescaling $L \to \lambda L$ the gradient, momentum, and second-moment accumulator transform as
\[
    G \;\to\; \lambda G, \qquad M \;\to\; \lambda M, \qquad D \;\to\; \lambda^2 D,
\]
since $D$ is an EMA of $G \odot G$. Substituting into \eqref{eq:symm_update},
\[
    \delta W \;\to\;
    -\eta\,\lambda^{-2\alpha} D^{\circ -\alpha}
    \odot \mathrm{LMO}_{(p+1)}\!\bigl(\lambda^{1-2\alpha}\, D^{\circ -\alpha} \odot M\bigr).
\]
The Schatten-$p$ LMO is positively homogeneous of degree $1/p$, i.e.\ $\mathrm{LMO}_{(p+1)}(\mu X) = \mu^{1/p}\mathrm{LMO}_{(p+1)}(X)$ for $\mu>0$, so the expression reduces to
\[
    \delta W \;\to\;
    -\eta\,\lambda^{\,-2\alpha\,+\,(1-2\alpha)/p}\;
    D^{\circ -\alpha} \odot \mathrm{LMO}_{(p+1)}\!\bigl(D^{\circ -\alpha} \odot M\bigr).
\]
For the update \emph{direction} to be independent of $\lambda$ (absorbing any scalar into the step size $\eta$), the exponent of $\lambda$ must vanish:
\[
    -2\alpha + \frac{1-2\alpha}{p} = 0
    \;\;\Longrightarrow\;\;
    \alpha = \frac{1}{2(p+1)}.
\]
This equation has a unique solution for every $p\ge 1$, establishing necessity and sufficiency.

\textbf{Recovery of Adam at $p=1$.}
Setting $p=1$ gives $\alpha = \tfrac{1}{4}$. At $p=1$, the Schatten-2 LMO satisfies $\mathrm{LMO}_2(X) \propto X$ (the update is proportional to the matrix itself), so
\[
    \delta W \;\propto\;
    D^{\circ -1/4} \odot \bigl(D^{\circ -1/4} \odot M\bigr)
    \;=\; D^{\circ -1/2} \odot M,
\]
which is exactly the Adam preconditioned update with RMS denominator $D^{1/2}$.

\textbf{Recovery of Muon as $p\to\infty$.}
As $p\to\infty$, $\alpha = 1/(2(p+1))\to 0$, so $D^{\circ -\alpha}\to \mathbf{1}$ and the preconditioning vanishes. The update reduces to $\mathrm{LMO}_\infty(M) = UV^T$, recovering the pure Muon orthogonalization step.
\end{proof}

\section{Distributed Training Considerations}

In distributed settings, our $p^*$ computation can be efficiently adapted. While we do not discuss practical implementation details, we provide a strategy for the computation of $p^*$ when activation matrices $A$ are sharded across different devices. Instead of communicating all activations to a single rank, our distributed implementation computes the local projected Gram contributions on each rank, which can then be combined via a single all-reduce of size $\mathcal{O}(\min(m, n))$ to correctly capture the global activation geometry.

More precisely, given activations $A$ sharded along the batch dimension as $A = [A_1, \dots, A_R]$ across $R$ ranks (with $A_r \in \mathbb{R}^{n \times k_r}$ the local shard), the curvature term $B_{i,i} = \|A^T V_M e_i\|^2$ admits the decomposition $\|V_M^\top A\|_F^2 = \sum_r \|V_M^\top A_r\|_F^2$, so each rank can compute its local contribution $V_M^\top A_r A_r^\top V_M$ and a single all-reduce of size $\mathcal{O}(\min(m, n)^2)$ or $\mathcal{O}(\min(m, n))$ if only the diagonal is required  recovers the global $B_{i,i}$. The momentum SVD $U_M \Sigma_M V_M^\top$ is computed redundantly on each rank from the (already-replicated) momentum buffer, incurring no additional communication. The alignment coefficients $C_{i,i} = (U_M^\top G_t V_M)_{i,i}$ are similarly local once gradients are reduced, which any data-parallel training loop already does. The total $p^\star$-induced communication overhead is therefore one all-reduce of a vector or small matrix of size $\mathcal{O}(\min(m, n))$ per selector call, which at our default update interval of 100 steps is negligible relative to the per-step gradient all-reduce.

%% file: appendix/c_maximal_p.tex
\section{Alternative Computation Methods for the Schatten-p Update}
\label{app:remez}

\paragraph{Layerwise Remez-fitted Newton-Schulz.}
Here, we present a computational routine that extends the Björck/Newton-Schulz iteration \cite{bjorck1971iterative,higham2008functions} underlying Muon to approximate the Schatten-$(p+1)$ norm update direction $U \Sigma^{1/p} V^T$. Starting from $X_0 = G / \|G\|_2$, the odd-degree update

\begin{equation*}
    X_{t+1} = a_t X_t + b_t\, X_t X_t^T X_t + c_t\, (X_t X_t^T)^2 X_t
\end{equation*}

decouples into a purely scalar recursion on the singular values: $\sigma_{t+1} = a_t \sigma_t + b_t \sigma_t^3 + c_t \sigma_t^5$. PolarExpress~\cite{amsel2025polar} and \citet{grishina2025accelerating} leverage the attractive fixed point of $f(x) = 1$ to derive Chebyshev-optimal coefficients for the orthogonalization target. No such contractive structure exists for $f(x) = x^{1/p}$.
Therefore, we decompose the exponent as $q = p^{-1/T}$, so that each of the $T$ iterations independently targets the mild perturbation $x \mapsto x^q$. Since $q \to 1$ as $T$ grows, a degree-5 polynomial can match this target accurately on each layer. We fit the coefficients by solving the minimax problem

\begin{equation}
    \min_{a_t, b_t, c_t}\ \max_{x \in [\varepsilon_t, 1]}\ \bigl| a_t x + b_t x^3 + c_t x^5 - x^{q} \bigr|
    \quad \text{s.t.} \quad a_t + b_t + c_t = 1,
    \label{eq:linf_remez}
\end{equation}

via SLSQP over a Chebyshev-node discretization. The pinning constraint prevents singular-value blow-up at $x = 1$, and the domain for iteration $t+1$ is updated to be the image of the fitted polynomial on $[\varepsilon_t, 1]$.
This yields the fastest routine we consider in the benchmark of~\Cref{fig:matrix_approximation}, effectively matching the efficiency of Muon's Newton-Schulz algorithm.

\paragraph{Concurrent alternatives.}
Two recent methods target the same primitive but do not fit our adaptive setting. \citet{qi2026delving} run two coupled Newton-Schulz recursions to compute $A^{1/2}$ and $A^{-1/2}$ jointly, then recover $U\Sigma^{1/2}V^T = X A^{-1/2} \cdot A^{1/2}$. The iteration is accurate, but it doubles working memory and only supports dyadic exponents $p = 2^k$. A second line uses the Zolotarev integral $x^{-\alpha} = \tfrac{\sin(\alpha\pi)}{\pi}\int_0^\infty (t + x)^{-1}\, t^{-\alpha}\, dt$, discretized by Gauss-Legendre to reduce the fractional factor to $k$ shifted resolvents $(G^T G + s_j I)^{-1}$. We solve all $k$ in parallel with a single batched Cholesky on the $m \times m$ normal equations, using power iteration for the spectral-norm rescaling to avoid any SVD. This supports arbitrary $p$, but the batched factorization costs $\mathcal{O}(k m^3)$ time and lacks the tensor-core throughput of matmul-only Newton-Schulz, which places it on the slow end of the Pareto frontier (\Cref{fig:matrix_approximation}, green).

\section{Selecting a Maximum p Threshold}
\label{app:maximal_p}

In practice, establishing a bounded search interval for the optimal Schatten exponent $p^{*}$ is necessary to maintain both computational efficiency and numerical stability. We restrict our scalar line search to a maximum value of $p_{max} = 50$ based on two primary structural and algorithmic limitations:

\begin{itemize}
    \item \textbf{Objective Function Saturation:} As $p$ grows large, the singular value terms $\sigma^{1/p}$ and $\sigma^{2/p}$ approach unity. Consequently, the proxy optimality objective function $J(p)$ becomes essentially flat, as the correlation-to-curvature ratio reaches its saturation limit. At this stage, the computed update $U\Sigma^{1/50}V^{T}$ is practically indistinguishable from the exact Muon update $UV^{T}$, making further optimization along $p$ redundant.
    \item \textbf{Estimator Precision and Numerical Degradation:} The precision of our hardware-efficient estimator for the fractional polar factor $U\Sigma^{1/p}V^{T}$ inherently worsens for highly elevated values of $p$. As demonstrated in Figure 4, direct polynomial approximations (such as Newton-Schulz with Remez coefficients) experience severe lift-off near the origin and fail to maintain acceptable error bounds when $p \ge 10$. While our primary Taylor approximation built atop the polar factor degrades much more gracefully and remains stable at higher exponents, its relative Frobenius error still increases as $p \to \infty$. Capping $p_{max}$ at 50 ensures that the optimizer relies on the Taylor approximation only where it remains demonstrably accurate.
\end{itemize}

%% file: appendix/d_p_stability.tex
\section{On the Stability of Optimal p Values}
\label{app:p_stability}

In this section, we analyze the empirical stability of the optimal Schatten exponent $p^{*}$ throughout the training process. Observations demonstrate that the trajectory of $p^{*}$ values remains relatively stable over time. Although the precise $p^{*}$ values are not strictly identical across independent runs, Figure 4 illustrates that specific layer types consistently exhibit similar behavioral patterns regardless of the random seed. 

Particularly, distinct architectural components naturally gravitate toward different optimal update geometries:
\begin{itemize}
    \item \textbf{Attention Weights:} These matrices typically require higher-rank updates, closely mirroring Adam-like update geometries at the beginning of the training process. 
    \item \textbf{MLP Layers:} In contrast, MLP layers generally necessitate slightly lower-rank updates throughout training, while still maintaining an update geometry that is very close to strict orthogonality.
\end{itemize}

This consistent structural divergence between layer types further highlights the practical benefit of utilizing a dynamically adapted, layer-wise $p^{*}$ selector over fixed global hyperparameter choices.

\begin{figure}
    \centering
    \includegraphics[width=1.0\linewidth]{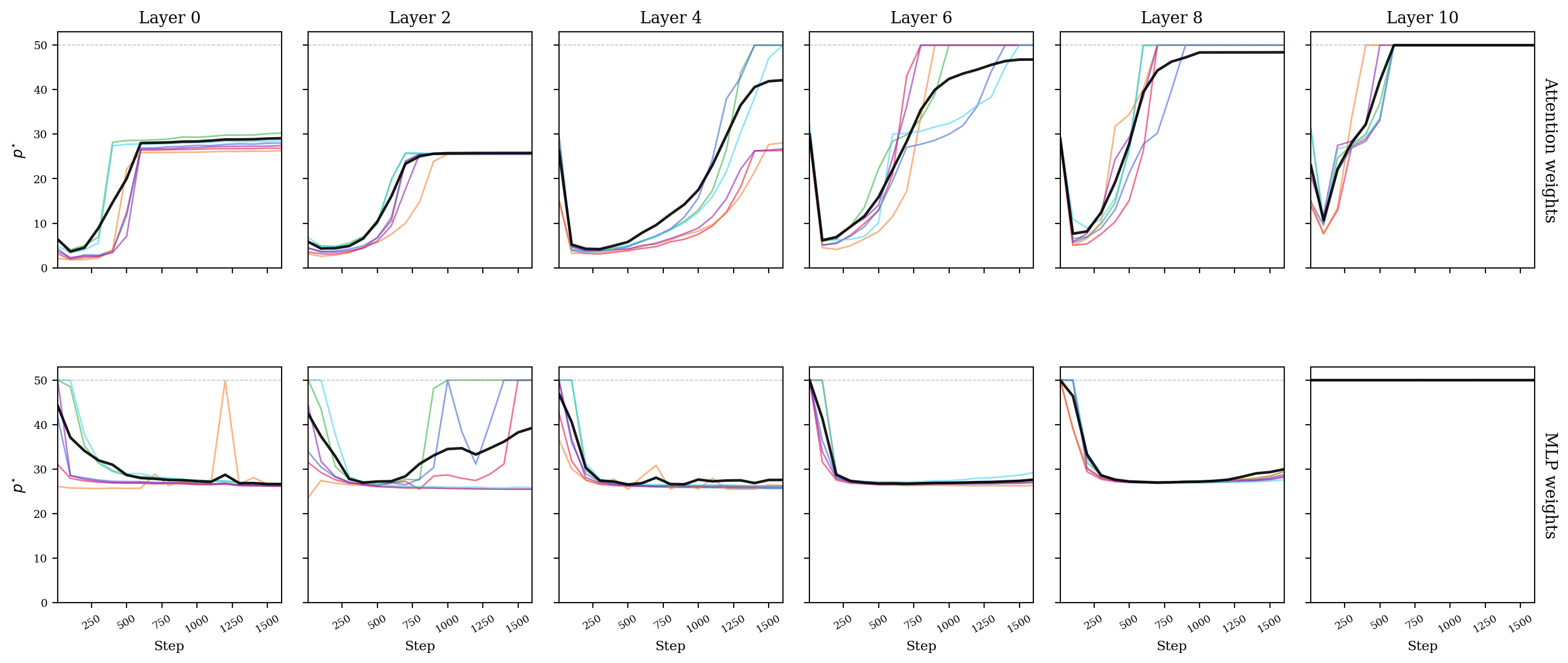}
    \caption{Here we plot the average $p^*$ values of attention and MLP layer weight matrices during training with the SMuon (Adam) optimizer across 6 random seeds, $p^*$ values are computed every 100 steps with an EMA of 0.95.}
    \label{fig:pstar-nanogpt}
\end{figure}

\subsection{Ablation Study: Update Frequency and EMA Regularization}
\label{sec:ablation}

We conducted an ablation study to investigate the interplay between update frequency (measured by the step gap) and the Exponential Moving Average decay rate on training stability and final convergence. In this case, we train an MLP-Mixer - Small model on the ImageNette dataset using a batch size of 62 and a learning rate of $0.02$ with the SMuon (Adam) optimizer. The results, averaged across three random seeds, are summarized in Table~\ref{tab:ema_ablation}.

The empirical data demonstrates that the optimal EMA decay rate is highly dependent on the update frequency. Models operating with highly frequent updates (Gap = 50) exhibited significant sensitivity to the EMA parameter. Specifically, in these high-noise regimes, a strong EMA ($\beta_p = 0.9$) was critical for stabilizing the training process, achieving a lower final loss ($0.0069$) compared to lower EMA values ($0.0088$ for $\beta = 0.0$). Conversely, when updates were less frequent (Gap $\ge$ 300), the larger accumulated data batches were inherently more stable, and moderate-to-low EMA values proved optimal (e.g., $0.0060$ for Gap 600, $\beta = 0.5$).

While moderate EMAs suffice for low-frequency updates in smaller-scale tests, larger models necessitate a stable tracking mechanism for a significantly higher number of parameter matrices. Thus enforcing a stricter constraint on the EMA strength needed for stable training. 

\begin{table}[htpb]
    \centering
    \caption{Final training loss across different update gaps and EMA values, averaged over three random seeds. In high-frequency update regimes (Gap = 50), higher EMA values are strictly required to stabilize the loss.}
    \label{tab:ema_ablation}
    \begin{tabular}{lccc}
        \toprule
        \textbf{Update Gap} & \textbf{$\beta_p$ = 0.0} & \textbf{$\beta_p$ = 0.5} & \textbf{$\beta_p$ = 0.9} \\
        \midrule
        50  & 0.0088 & 0.0086 & \textbf{0.0069} \\
        150 & 0.0073 & \textbf{0.0068} & 0.0074 \\
        300 & \textbf{0.0062} & 0.0074 & 0.0069 \\
        600 & 0.0063 & \textbf{0.0060} & 0.0063 \\
        \bottomrule
    \end{tabular}
\end{table}

%% file: appendix/e_approximating_p_star.tex
\section{Efficient Estimation of Optimal p via Frobenius-Anchored Randomized SVD}
\label{app:approximate-p}

Evaluating the exact objective function $J(p)$ of~\Cref{prop:p_star_moment_tight} requires the full singular value decomposition (SVD) of matrices $A, G \in \mathbb{R}^{N \times D}$, an $\mathcal{O}(ND^2)$ operation that is prohibitively expensive for large layers. To maintain computational tractability, we adapt existing randomized SVD techniques to estimate the optimal parameter $p^* = \arg\max_p J(p)$. Because the function relies on continuous deformations of Schatten norms, a naive truncation of the spectrum could introduce severe asymptotic errors. We resolve this by anchoring a low-rank randomized SVD with an exact computation of the Frobenius energy, yielding a strictly bounded surrogate objective.

\subsection{Asymptotic Behavior and Structural Dependence}

To understand the necessity of modeling the uncomputed spectral tail instead of only the $\mathrm{topk}$ singular values, we first analyze the behavior of $J(p)$ at the boundaries of its domain $p \in (1, \infty)$. Let $q_G(p) = 1 + \frac{1}{p}$ and $q_A(p) = \frac{2(p+1)}{p-1}$ denote the exponents of the inner sums for $G$ and $A$, respectively. 

\begin{itemize}
    \item \textbf{The Lower Bound ($p \to 1^+$):} As $p$ approaches 1, $q_A(p) \to \infty$. The denominator sum becomes heavily dominated by the largest singular value, effectively collapsing to the spectral norm $\sigma_1(A)$. If matrix $A$ exhibits a dominant spectral norm with a sharp drop-off, the objective function is heavily penalized, pulling the optimal $p^*$ toward 1.
    \item \textbf{The Upper Bound ($p \to \infty$):} As $p$ approaches infinity, $q_G(p) \to 1$. The numerator sum converges to the nuclear norm $\|G\|_* = \sum \sigma_i(G)$. If matrix $G$ possesses a heavy power-law tail (e.g., typical of unnormalized gradients), the nuclear norm diverges. This infinite mass acts as a mathematical attractor, forcing $p^* \to \infty$. 
\end{itemize}

Consequently, the optimal $p^*$ is entirely dictated by the structural dichotomy between $A$ and $G$. If the matrices are forced toward isometry (e.g., via orthogonal initialization) or their variance is strictly constrained (e.g., via RMSNorm), their spectra flatten or decay exponentially. 
% In these well-conditioned regimes, the infinite volume of the tails is suppressed, and $p^*$ stably converges to an interior value (e.g., $p \in [2, 10]$). 
% Replacing variance constraints with amplitude constraints (e.g., Tanh gating) reintroduces heavy tails, pushing $p^*$ back toward the asymptotic boundaries.

\subsection{Bounding the Spectral Tail}

To approximate $J(p)$ without computing the full spectrum, we utilize the randomized subspace iteration framework~\cite{halko2011finding} to efficiently extract the top $k$ singular values, $\hat{\Sigma}_k$, and estimate the residual spectral norm $R \approx \sigma_{k+1}$ using Gaussian test vectors. 

While $R$ bounds the maximum magnitude of the uncomputed tail, it does not constrain its volume. We constrain the volume by algebraically computing the exact uncomputed Frobenius energy: $E_M = \|M\|_F^2 - \sum_{i=1}^k \hat{\sigma}_i^2$. This leaves exactly $d_M = \min(N, D) - k$ singular values constrained by a known total energy $E_M$ and a strict individual maximum $R_M$.

The inner sums of $J(p)$ require optimizing $\sum x_i^{q/2}$ subject to $\sum x_i = E$ and $x_i \in [0, R^2]$, where $x_i = \sigma_i^2$. For $p \in (1, \infty)$, the exponent $q_G/2 \in (0.5, 1)$ making the transformation strictly concave, whereas $q_A/2 > 1$ making it strictly convex. By majorization, the extreme values of these sums occur when the remaining energy $E$ is packed into singular values of maximum allowable size $R^2$. We define the worst-case tail mass operator $\mathcal{T}$ as:

\begin{equation}
\mathcal{T}(E, R, d, q) = \begin{cases} 
      d \left(\frac{E}{d}\right)^{q/2} & \text{if } \frac{E}{R^2} > d \\
      E \cdot R^{q-2} & \text{otherwise}
   \end{cases}
\end{equation}

This yields a strict, worst-case lower bound for the numerator (concave minimization) and upper bound for the denominator (convex maximization). The final surrogate objective, which can be evaluated in $\mathcal{O}(k)$ time, is defined as:

\begin{equation}
\tilde{J}_{lower}(p) = \frac{ \left( \sum_{i=1}^k \hat{\sigma}_i(G)^{q_G} + \mathcal{T}(E_G, R_G, d_G, q_G) \right)^{\frac{p}{p+1}} }{ \left( \sum_{j=1}^k \hat{\sigma}_j(A)^{q_A} + \mathcal{T}(E_A, R_A, d_A, q_A) \right)^{\frac{p-1}{2(p+1)}} }
\end{equation}

\subsection{Algorithm}

The complete procedure for dynamically estimating $p^*$ is detailed in Algorithm \ref{alg:fa-rsvd}. The bounded 1D optimization is performed in log-space ($x = \ln(p-1)$) to preserve numerical precision near the asymptotic boundary $p \to 1$.

\begin{algorithm}[H]
\caption{Frobenius-Anchored Estimation of Optimal Parameter $p^*$}
\label{alg:fa-rsvd}
\begin{algorithmic}[1]
\State \textbf{Input:} Matrices $A, G \in \mathbb{R}^{N \times D}$, truncation rank $k$, power iterations $q$, max bound $p_{max}$
\State \textbf{Output:} Optimal parameter estimate $p^*$
\State $d_A \gets \min(N, D) - k$, \quad $d_G \gets \min(N, D) - k$
\State Compute exact energies: $F_A^2 \gets \|A\|_F^2$, \quad $F_G^2 \gets \|G\|_F^2$
\For{$M \in \{A, G\}$}
    \State Draw Gaussian matrix $\Omega \in \mathbb{R}^{D \times (k+5)}$
    \State $Y \gets M \Omega$
    \For{$i = 1$ \textbf{to} $q$} \Comment{Subspace Iteration}
        \State $Y \gets M(M^T Y)$ with intermediate orthonormalization
    \EndFor
    \State Extract orthonormal basis $Q$ from $Y$
    \State Compute SVD of projected matrix: $U, \hat{\Sigma}_M, V^T \gets \text{SVD}(Q^T M)$
    \State Draw Gaussian vectors $\omega \in \mathbb{R}^{D \times 5}$
    \State Estimate residual norm: $R_M \gets \frac{1}{\sqrt{D}} \max \|(I - QQ^T)M \omega\|_2$
    \State Compute uncomputed energy: $E_M \gets \max\left(0, F_M^2 - \sum_{i=1}^k \hat{\sigma}_{M,i}^2\right)$
\EndFor
\State Define bounded surrogate objective $\tilde{J}_{lower}(p)$ using $\hat{\Sigma}_A, \hat{\Sigma}_G, R_A, R_G, E_A, E_G, d_A, d_G$
\State $x^* \gets \arg\max_{x \in [\ln(0.01), \ln(p_{max}-1)]} \tilde{J}_{lower}(1 + \exp(x))$ \Comment{Log-space optimization}
\State \textbf{return} $p^* = 1 + \exp(x^*)$
\end{algorithmic}
\end{algorithm}

\subsection*{Variance Reduction}
Stochastic subsampling introduces variance that can destabilize the trajectory of $p^*$. To mitigate this without increasing the sampling cost, we employ \textit{Spectral Momentum}. Instead of maximizing $J(p)$ on the instantaneous subsample $\Sigma_t$, we maintain an exponential moving average (EMA) of the spectral statistics over training steps:
\begin{equation}
    \bar{\Sigma}_t = \beta \bar{\Sigma}_{t-1} + (1-\beta) \Sigma_{sub, t}
\end{equation}
This temporal smoothing dampens the noise from random sampling while allowing $p^*$ to adapt to the shifting geometry of the loss landscape.

%% file: appendix/f_adam_results.tex
% \section{AdamW Results}
% \label{app:adamw}

\section{Experiment Hyperparameters}
\label{app:hyperparams}

\paragraph{NanoGPT experiments}
Based on~\citet{du2026newtonmuonoptimizer}, our NanoGPT training setup processes 393,216 tokens per global step, achieved using a global batch size of 8 sequences with a sequence length of 49,152 ($48 \times 1024$). The models are trained for a total of 1,695 iterations on the FineWeb dataset. We use a 12-layer GPT architecture with 6 attention heads and an embedding dimension of 768. All models utilize a learning rate schedule with 0 warmup iterations and a linear cooldown over the final 762 iterations (45\% of total steps), with weight decay set to $0.0$.

For the SMuon and SMuon (Adam) optimizers, the hyperparameters remain unchanged from those that were optimal for Muon (which were already tuned). The Muon base learning rate is set to $0.05$ with momentum parameters of $0.95$. The auxiliary Adam optimizer applied to the embeddings, LM head, and scalars uses a learning rate of $0.008$ and $\beta = (0.8, 0.95)$. For the Schatten exponent updates in SMuon, we use an update interval of 100 steps, bounds of $p_{\min} = 1.02$ and $p_{\max} = 50.0$, conduct a $p^*$ evaluation every 100 steps with $\beta_p = 0.95$ to keep the training stable. SMuon (Adam) uses a $\beta_2$ value of $\beta_p = 0.95$.

For the standard Adam baseline, we use the exact same learning rate ($0.008$) as that of the auxiliary Adam in the Muon setup.

For the MuAdam optimizer, we tuned the learning rate multiplier on the originally tuned learning rate across the grid $\{0.001, 0.01, 0.1, 0.5, 1.0, 1.5\}$ and selected the optimal value. Additionally, we experimented with the momentum parameters $(\beta_1, \beta_2)$ for MuAdam, testing both $(0.9, 0.999)$ and $(0.95, 0.95)$. We found the latter to perform more robustly and ended up sticking with $(\beta_1, \beta_2) = (0.95, 0.95)$ for our final configurations.

\paragraph{Vision experiments} We evaluate our optimization methods on image classification using the ImageNette dataset. We train two distinct architectures sourced from the \texttt{timm}~\cite{rw2019timm} library: a Vision Transformer (ViT-Small, \texttt{vit\_small\_patch16\_224}) and an MLP-Mixer (\texttt{mixer\_b32\_224}). All models are trained for $20$ epochs using a total batch size of $256$ in \texttt{bfloat16} mixed precision. Our data processing pipeline applies standard augmentations including RandAugment, random horizontal flipping, resizing to $256$, and center cropping to $224 \times 224$. The learning rate is decayed using a cosine annealing schedule without restarts. To ensure robust evaluation, all hyperparameter configurations are repeated across four random seeds ($256, 512, 1024,$ and $2048$).

For the AdamW baseline, we sweep the learning rate over the grid $\{0.001, 0.005, 0.01, 0.02\}$ with a fixed weight decay of $0.01$. For the Muon, SMuon and SMuon (Adam) optimizers, we sweep the primary learning rate over $\{0.001, 0.003, 0.005, 0.01, 0.02\}$ for the ViT and $\{0.003, 0.005, 0.01, 0.02\}$ for the MLP-Mixer. The core Muon/SMuon parameters utilize a momentum of $0.95$ and a weight decay of $0.05$, with $\beta_2 = 0.99$ for SMuon (Adam). The auxiliary Adam optimizer—applied to biases, normalizations, and other non-matrix parameters—uses a fixed learning rate of $0.001$ and a weight decay of $0.01$. 
The Schatten exponent is updated using the exact tightness method at an interval of $195$ steps (derived from $50 000 / \text{batch\_size}$), applying a singular value momentum of $0.95$.

\paragraph{LoRA Fine-Tuning} We evaluate our optimization methods on a low-rank adaptation (LoRA) fine-tuning task using the Qwen-2.5-0.5B model on the GSM8K dataset. The base model is loaded in 4-bit quantization using \texttt{bfloat16} compute precision. We apply LoRA to both the attention and MLP projections (specifically \texttt{q\_proj}, \texttt{k\_proj}, \texttt{v\_proj}, \texttt{up\_proj}, \texttt{gate\_proj}, \texttt{query\_key\_value}, \texttt{dense\_h\_to\_4h}, and \texttt{dense\_4h\_to\_h}) with a rank of $r=32$ and $\alpha=64$. 

All models are trained for 4 epochs using a batch size of 32. We utilize a cosine learning rate schedule with a linear warmup over the first 10\% of total training steps. To ensure statistical robustness, all hyperparameter configurations are evaluated across 3 random seeds. 

For the AdamW baseline, as well as the Muon and SMuon optimizers, we sweep the primary learning rate across the grid $\{0.0003, 0.001, 0.003, 0.005, 0.01, 0.02\}$. For the Muon and SMuon configurations, the core 2D matrix parameters use a momentum of $0.95$ (and $\beta_2 = 0.95$ for the preconditioned version), while all 1D parameters (such as biases and normalizations) are handled by an auxiliary Adam optimizer. 

For the SMuon variants, the Schatten exponent is initialized at the minimum bound, $p=1.02$ to avoid computational instabilities and updated using the method of Proposition 1. We apply a Schatten update interval of 50 steps and a singular value momentum (\texttt{sv\_momentum}) of $0.95$.

\subsection{Learning Rate Invariance}

An important part of evaluating optimizers in realistic deployment scenarios is to measure their sensitivity to hyperparameters. In practice, the Muon optimizer is also very appreciated for its relatively stable behavior across learning rate values. In this setting, we train a ViT - small model and an MLP-Mixer small model with patch size 16 on the ImageNette dataset. In this experiment, the RandAugment augmentation method is unused to reduce variance in the training loss and measure the capability of the optimizer to fit the training set across different learning rates. Here, we use a batch size of $128$ with $p^*$ updates every epochs and $\beta_p = 0.95$. Results are displayed in~\ref{fig:lr-sensitivity}.

As seen in~\Cref{fig:lr-sensitivity}, the learning rate of SMuon and SMuon (Adam) is not radically different to that of Muon, with SMuon (Adam) only demonstrating slightly higher sensitivity to the learning rate on the MLP-Mixer - small model. 

\begin{figure}
    \centering
    \includegraphics[width=0.5\linewidth]{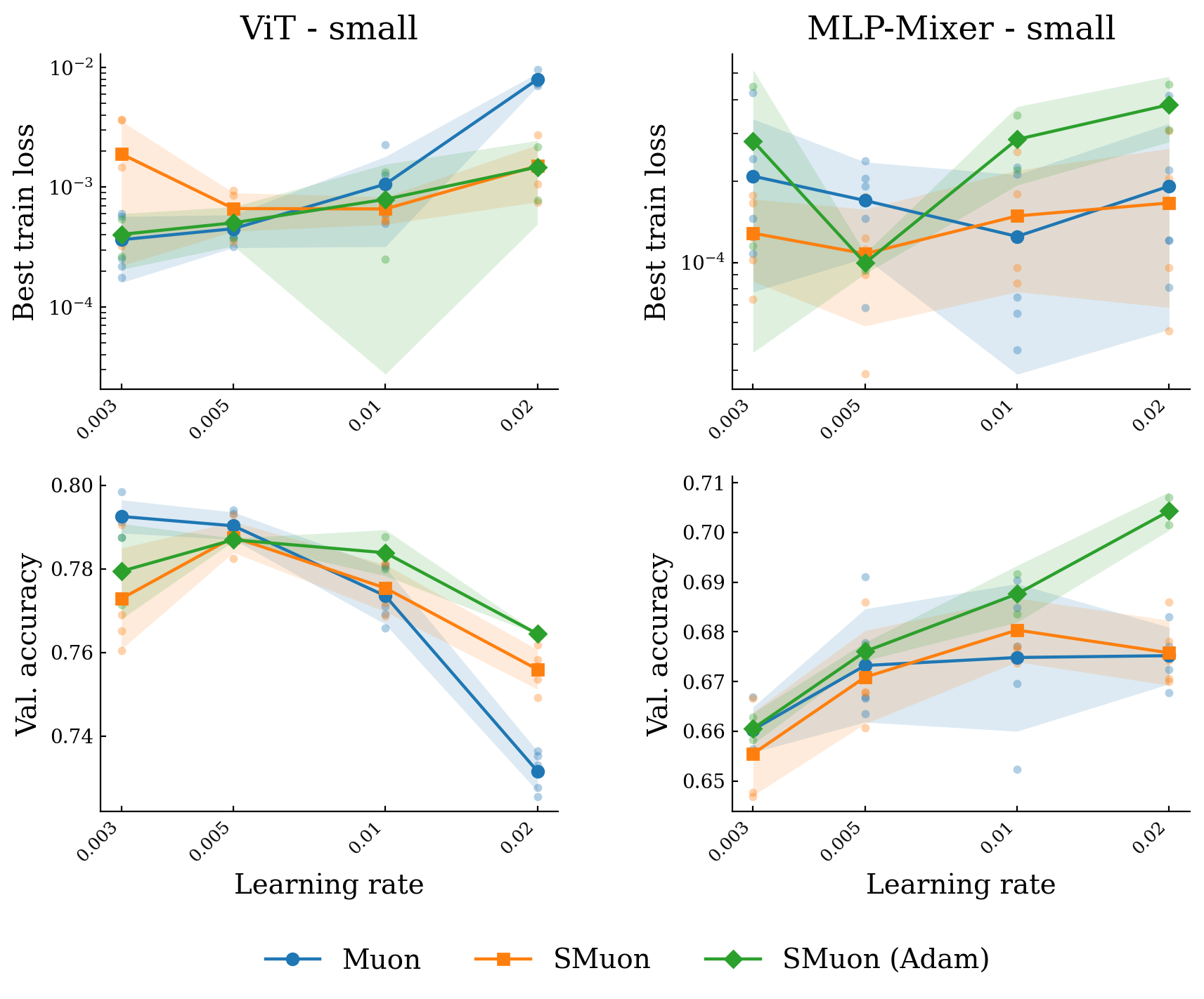}
    \caption{Learning rate invariance curves for two models trained on the ImageNette dataset}
    \label{fig:lr-sensitivity}
\end{figure}

%% file: appendix/g_retroaction.tex
\section{Investigating the Retroactive Effect of Using Adaptive Updates}
\label{app:retroaction}

In this rather exploratory section, we investigate the use of adaptive geometry updates. Our main endeavor is to investigate whether the geometry of updates at step $t$ can have an influence on the optimal geometry of updates at step $t+1$. To this end, we train an MLP-Mixer - Small model on the ImageNette dataset, using either Muon of SMuon. In this case, Muon is equipped with a $p^*$ approximator that runs 12 times at regular intervals on each experiment, that is ran on three different initialization strategies using three random seeds for each. Our results are presented in~\Cref{fig:pstar-retroaction}.

Figure~\ref{fig:pstar-retroaction} reveals three convergence regimes.
\texttt{kaiming\_normal} and \texttt{xavier\_uniform} both favour Muon over SMuon ($d \approx -0.35$, $p \approx 0.07$ in each case), with divergence localised to \texttt{ch\_fc2} and \texttt{tok\_fc1} while \texttt{ch\_fc1} saturates at $p^{*} = 50$ for both optimisers.
Under \texttt{orthogonal} initialisation, both optimizers decline monotonically to $\bar{p}^{*} \approx 35$ with no detectable difference ($d \approx 0.03$, $p \approx 0.87$).
Initialisation explains more variance in $p^{*}$ than optimizer choice, and no pairwise difference survives Bonferroni correction ($\alpha_{\text{adj}} = 0.025$).

Therefore, it appears this experiment does not give any conclusive evidence of a retroactive effect between the optimizer update rule and the optimality of the geometry for the update at later training steps. However, this study, beyond single step dynamics, could prove particularly useful in the future to encourage the use of optimizers with strong positive retroactive effects.

Finally, this result can be interpreted positively, as strong retroactive effects between update rules and $p^*$ trajectories would severely limit the applicability of single step surrogate models. 

\begin{figure}
    \centering
    \includegraphics[width=0.85\linewidth]{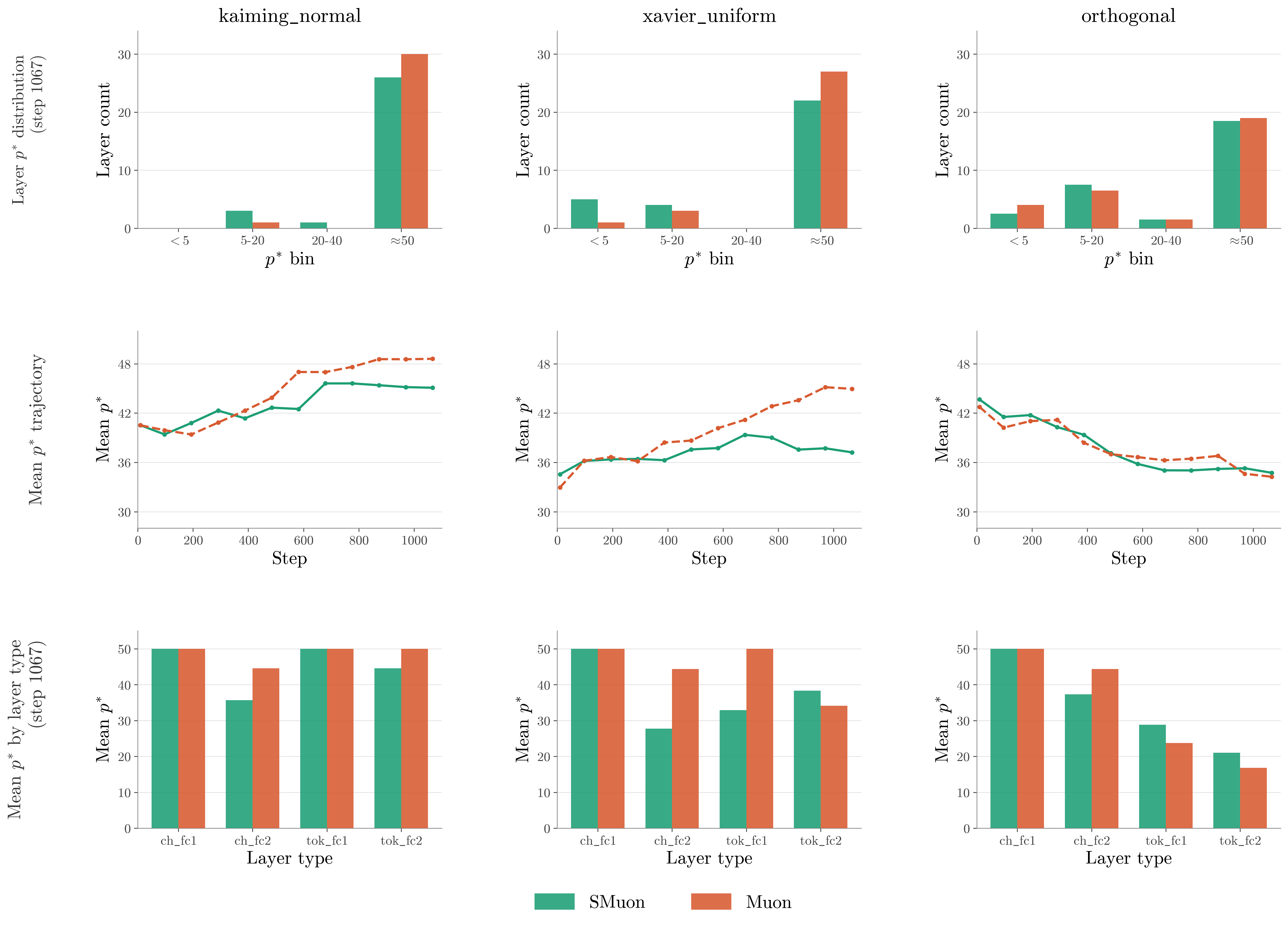}
    \caption{Comparison of $p^*$ dynamics between Muon and SMuon optimizers across three weight initialization strategies (\texttt{kaiming\_normal}, \texttt{xavier\_uniform}, \texttt{orthogonal}). Each column corresponds to one initialization. Top row: distribution of final p* values across the 31 network layers, binned into four ranges. Middle row: mean $p^*$ trajectory over 12 checkpoints (steps 10–1067). Bottom row: mean final $p^*$ broken down by layer type (channel-mixing and token-mixing branches, first and second fully-connected layers).}
    \label{fig:pstar-retroaction}
\end{figure}

%% file: appendix/last.tex
\begin{figure}[h!]
    \centering
    \includegraphics[width=0.7\linewidth]{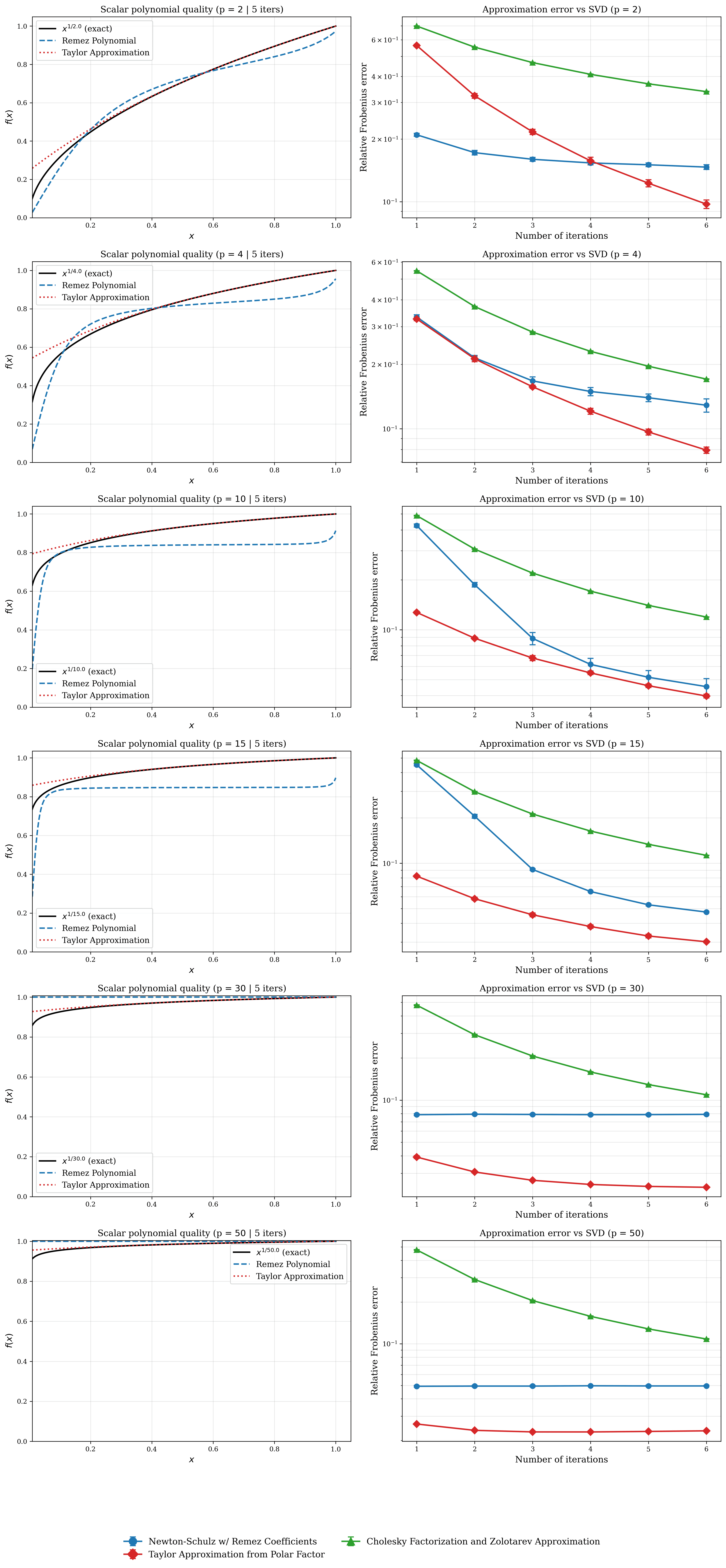}
    \caption{We plot the approximation of quality of $U \Sigma^{1/p} V^T$ for the different methods we propose.}
    \label{fig:maximal-p}
\end{figure}